\newcolumntype{C}[1]{>{\centering\let\newline\\\arraybackslash\hspace{0pt}}m{#1}}
\newtheorem{theorem}{Theorem}
\newtheorem*{theorem*}{Theorem}
\newcommand\scalemath[2]{\scalebox{#1}{\mbox{\ensuremath{\displaystyle #2}}}}
\newcommand\Image[3][]{%
  \tabular[b]{@{}c@{}}\includegraphics[#1]{#2}\\
    #3
  \endtabular}
\title{Shaping Up SHAP: Enhancing Stability through Layer-Wise Neighbor Selection}
\author{
    Gwladys Kelodjou\textsuperscript{\rm 1},
    Laurence Roz\'e\textsuperscript{\rm 2},
    V\'eronique Masson\textsuperscript{\rm 1},
    Luis Gal\'arraga\textsuperscript{\rm 1}, \\
    Romaric Gaudel\textsuperscript{\rm 1},
    Maurice Tchuente\textsuperscript{\rm 3},
    Alexandre Termier\textsuperscript{\rm 1}
}
\begin{document}
	\maketitle
	\begin{abstract}
        Machine learning techniques, such as deep learning and ensemble methods, are widely used in various domains due to their ability to handle complex real-world tasks. However, their black-box nature has raised multiple concerns about the fairness, trustworthiness, and transparency of computer-assisted decision-making. This has led to the emergence of local post-hoc explainability methods, which offer explanations for individual decisions made by black-box algorithms. Among these methods, Kernel SHAP is widely used due to its model-agnostic nature and its well-founded theoretical framework. Despite these strengths, Kernel SHAP suffers from high instability: different executions of the method with the same inputs can lead to significantly different explanations, which diminishes the relevance of the explanations.
        The contribution of this paper is two-fold. On the one hand, we show that Kernel SHAP's instability is caused by its stochastic neighbor selection procedure, which we adapt to achieve full stability without compromising explanation fidelity. On the other hand, we show that by restricting the neighbors generation to perturbations of size 1 -- which we call the coalitions of Layer 1 -- we obtain a novel feature-attribution method that is fully stable, computationally efficient, and still meaningful.
	\end{abstract}
	
	\section{Introduction and Related Work}

Modern machine learning techniques, such as deep learning and ensemble methods, have become invaluable tools in diverse high-stakes fields such as healthcare and banking, due to their accuracy at handling complex real-world tasks. Yet, their ``black-box" nature has raised concerns about their fairness and trustworthiness, which in turns has motivated the interest in explainability. In particular, {\em local post-hoc} explainability is concerned with providing a posteriori explanations for the decisions of black-box algorithms on single instances. Notable approaches in this line are LIME \cite{ribeiro2016should}, SHAP \cite{lundberg2017unified}, and Grad-CAM \cite{selvaraju2017grad}, which compute feature-attribution explanations. A feature attribution explanation assigns a contribution score to each of the model's input features. This tells users how (positively or negatively) and how much an input feature influenced the black-box's answer on a target instance.
    
Among the explainability methods based on feature attribution, SHAP is particularly popular because its attribution scores, the \textbf{SHAP values}, are inspired by the well-founded \textbf{Shapley values}~\cite{shapley1953value} from coalitional game theory.
In practice, SHAP implementations typically compute approximations of the theoretical SHAP values because of the exponential complexity of the exact calculation. For instance, the widely used \textbf{Kernel SHAP} -- SHAP's model-agnostic flavor -- estimates the attribution scores by solving a linear regression on a random sample of {\em neighbor instances}, constructed by perturbing the target instance. 

While local post-hoc explanations based on perturbations are widely used to explain black-box models (LIME also falls in this category), this class of methods are known to suffer from instability.~\citet{zhou2021s} argue that a key property of any explanation technique is its {\em stability} or reproducibility, defined as the method's ability to deliver the same results under the same conditions across multiple executions. They therefore propose a framework to determine the number of perturbation points required to ensure stable LIME explanations.~\citet{visani2022statistical} introduced two metrics to quantify the stability of LIME explanations: the Variable Stability Index, which measures the variation of the reported explanatory features, and the Coefficient Stability Index, which captures the variation of the coefficients associated to those features.~\citet{visani2020optilime} offer a framework that allows users to choose the level of stability and fidelity for LIME explanations.

We highlight the difference between {\em stability} and the criterion called {\em robustness} introduced by \citet{alvarez2018robustness}. An explanation method is robust if similar, but not identical, inputs entail similar explanations. \citet{alvarez2018robustness} quantify robustness as the maximum change in explanations obtained when small perturbations are applied to a given instance. They demonstrate that state-of-the-art methods (including LIME and SHAP) do not satisfy the robustness criterion. Stability and robustness are different concepts, however stability is a necessary condition for robustness.

While stability has been well studied for LIME, and stable improvements of LIME have been proposed, this is not the case for SHAP. In this paper, we show that the Kernel SHAP's estimator also suffers from stability issues, which stem from its stochastic neighbor selection. Given the popularity of Kernel SHAP, such instability diminishes the utility of SHAP, not to mention its potential impact on the trust in post-hoc explainability approaches. We list the contributions of this paper in the following.
    \begin{itemize}[leftmargin=*]
        \item Our first contribution is an alternative procedure for choosing the neighbors of a target instance in Kernel SHAP so that stability is improved.
        \item   Our second contribution is a thorough experimental study of the stability but also the fidelity of Kernel SHAP under different neighbor selection procedures. It shows that our proposed neighbor selection approach achieves stability without hurting explanation fidelity.
        \item Motivated by the results of our experiments, our third contribution is a novel approximation of the SHAP values, which relies on neighbors generated from simple perturbations -- called the \emph{layer-1 neighbors}. We demonstrate, via a theoretical study, that the obtained scores meet the main properties of attribution methods based on coalitional game theory. This means that our approach is an attribution method in itself, which provides scores that are stable and faster to compute than the SHAP values, while retaining their good properties. 
    \end{itemize}
    
    The remainder of this paper is organized as follows: in the \textit{Background} section, we provide an overview of SHAP and Kernel SHAP. The \textit{Reason for Instability} section brings to light the instability issues of Kernel SHAP. In the \textit{Improving Kernel SHAP’s Stability} section, we present our approach for improving the stability of Kernel SHAP. The \textit{Experiments} section provides an empirical evaluation of our proposed method. This experiment shows that using only {\em layer-1} neighbors already yields good results. Based on these results, in the \textit{First Layer Attribution Analysis} section, we conduct a theoretical analysis on the properties of our attribution method based on {\em layer-1} neighbors. We evaluate our attribution method in the latter part of this section by comparing the attributions of the exact SHAP values with those of {\em layer-1}. Finally, we conclude the paper in the last section, summarizing our findings and suggesting future research directions.

	\section{Background}
	\label{sec:background}

We now review the definitions of Shapley and SHAP values.
\subsection{Shapley Values}
 The Shapley value is a mathematical concept from game theory that was introduced by \citet{shapley1953value} to allocate performance credit across coalition game players. It is used to evaluate the \textit{fair share}  of a  player in a  cooperative game \cite{ghorbani2019data}. Suppose that we have a cooperative game with $M$ players numbered from $1$ to $M$, and let $N$ denote the set of players: $N = \{1, 2, \cdots, M\}$. A coalitional game is a tuple $\langle N, v \rangle$ where $v: 2^N \rightarrow \mathbb{R}$ is a characteristic function such that $v(\emptyset) = 0$ \cite{strumbelj2010efficient}. Subsets of $N$ are \textbf{coalitions} and $v$ returns a value for each coalition $S \,(S \subseteq N)$. So, for each coalition $S$, $v(S)$ represents the reward that arises when the players in the subset $S$ participate in the game. $v(S)$ can be seen as the worth of coalition $S$. The set $N$ is also a coalition, and we call it the \textbf{grand coalition}. Thus $v(N)$ is the total payoff that we aim to split among the players in a \textit{fair way}. Fair allocations must be based on each player’s contribution to the profit.
 The marginal contribution of a player $j$ to a coalition $S$ is defined as the additional value induced by including $j$ in the coalition: $v(S \cup \{j\}) - v(S)$.
 As we can see, this marginal contribution depends on the composition of the coalition. To determine the player's overall contribution, it is necessary to consider all possible subsets of players, i.e., coalitions.
Then, the Shapley value $\phi_j(v)$ of player $j$ for a game $v$ is calculated as a weighted average of the marginal contributions to the coalitions in which that player participates \cite{ruiz1998family}:
 \begin{equation}
		\scalemath{.95}{
  \phi_j(v) = \sum_{S \subseteq N\setminus\{j\}}\frac{|S|!(|N|-|S|-1)!}{|N|!} \left[v(S \cup \{j\}) - v(S)\right]. 
        }
            \label{eq:theorem_shapley}		
\end{equation} 

\noindent \citet{shapley1953value} proved that this value is the unique allocation of the grand coalition which satisfies the following axioms: Efficiency, Symmetry, Dummy, and Additivity. 

In recent years, the concept of Shapley value has become a popular technique for interpreting machine learning models~\cite{song2016shapley, ghorbani2019data, lundberg2017unified}. In this setting, the players are the features used by the model and the gain is the model's  prediction.  The aim is therefore to assign each feature an importance value that represents its effect on the model's prediction. Let $f$ be the predictive model to be explained, and $N = \{1, 2, \cdots, M\}$ be a set representing $M$ input features. $f$ maps an input feature vector $x = [x_1, x_2, \cdots, x_M]$ to an output $f(x)$ where $f(x) \in \mathbb{R}$ in regression and $f(x) \in [0,1]^{|C|}$ in classification, with $C$ being a finite set of labels. Shapley's equation thus becomes:

 \begin{equation}
 \scalemath{0.82}{
		\phi_j(f,x) = \sum_{S \subseteq N\setminus\{j\}}\frac{|S|!(|N|-|S|-1)!}{|N|!} \left[f_{S \cup \{j\}}(x_{S\cup \{j\}}) - f_S(x_S)\right],   
  }
\end{equation}
where $x_S$ represents the sub-vector of $x$ restricted to the feature subset $S$, and $f_S(x_S)$ is the model output restricted to the feature subset $S$.
To compute the Shapley values in such a setting, we must find out how to calculate $f_S(x_S)$, because most models cannot handle arbitrary patterns of missing input values -- they are only defined for the grand coalition. There are several approaches to do so. Here we will look at how SHAP~\cite{lundberg2017unified} works.

 \subsection{SHAP Values}

 SHAP values \cite{lundberg2017unified} are defined as the coefficients of an additive surrogate explanation model which is a linear function of binary features: 
 \begin{equation}
     g(z') = \phi_0 + \sum_{i=1}^M \phi_i z'_i.
     \label{eq:model_g}
 \end{equation}
  Here, $z' \in \{0,1\}^M$ is a binary vector representing a coalition of features, where $z'_i = 1$ when the $i$-th feature is observed and $z'_i = 0$ otherwise. The values $\phi_i \in \mathbb{R}$ correspond to the feature attribution values. 

  For example, a coalition $z' = [1, 0, 0, 1]$ simply represents the coalition $S = \{x_1, x_4\}$ if we assume $M = 4$. Equation \eqref{eq:shap_phis} gives the Shapley's values, where $|z'|$ is the number of $1$s in vector $z'$, and $z^\prime \setminus i$ is derived from $z'$ by setting $z'_i = 0$.
\begin{equation}
\scalemath{0.82}{
		\phi_i(f,x) = \sum_{z^\prime \in \{0,1\}^M}\frac{(|z'|-1)!(M-|z^\prime|)!}{M!}\left[f_x(z^\prime) - f_x(z^\prime \setminus i)\right]
  }
		\label{eq:shap_phis}
\end{equation} 

In Equation \eqref{eq:shap_phis}, $f_x(z')=\mathbb{E}[f(z) \,|\, z_S=x_S]$ with $S=\{i \in N \,|\, z'_i=1$\} and $z_S$ the sub-vector of $x$ restricted to the feature subset $S$. In order words $f_x(z')$ is the average of $f$'s answers when we fix the present features in $z'$, and consider all possible values of the absent features. 
SHAP values satisfy three desirable properties: \textit{local accuracy, missingness} and \textit{consistency} \cite{lundberg2017unified}, which are equivalent to those of the Shapley values (regarding local accuracy and consistency). 
 The exact computation of the SHAP values is exponential in the number of features, so numerous approaches have been proposed to approximate them. We are interested here in \textbf{Kernel SHAP}, which is the main model-agnostic variant of SHAP.
 \subsubsection{Kernel SHAP.} 
 It is a linear-regression-based approximation of the SHAP values. The idea is to find a surrogate function $g$ by solving a weighted least squares problem with weighting kernel $\pi_x$: 
 \begin{equation}
     \textit{arg min}_{g} \sum_{z'\in \{0,1\}^M} \pi_x(z') (g(z') - f_x(z'))^2
     \label{eq:wlr}
 \end{equation}
 \begin{equation}
			\pi_{x}(z') = \frac{M-1}{\binom{M}{|z'|} |z'|(M-|z'|)}
                \label{eq:pi}
\end{equation}
where $g$ in Equation \eqref{eq:wlr} is defined as in Equation \eqref{eq:model_g}. We note that the values $\pi_{x}(z') = \infty$ when $|z'| \in \{0, M\}$. This enforces constraints (i) $\phi_0 = f_x(\emptyset) = E[f(x)]$ for the intercept of $g$ and (ii) $\sum_{i = 1}^{M} \phi_i = f(x) - f_x(\emptyset)$ for the sum of the coefficients. The latter constraint is the \textit{local accuracy} constraint, which states that the sum of the attribution coefficients equals the total payoff, the prediction $f(x)$ in this case. 
To approximate the SHAP values, Kernel SHAP samples a subset of coalitions and then uses the sample to solve Equation~\eqref{eq:wlr}.
The size of the sample used to learn Kernel SHAP is called the \textbf{budget}. Given a budget, the sampling is guided by the coalition weights ($\pi_{x}$, also called \textit{Shapley Kernel}). The weight assigned to each coalition is determined by the number of features involved in that coalition (denoted as $|z'|$ and representing the number of $1s$ in $z'$). From now we use the terms neighbor and coalition interchangeably.

\textbf{Layer} $i$ is defined as the set of coalitions having $i$ features present or $i$ features absent (or $M-i$ features present). All the coalitions within the same layer have the same weight. Table \ref{tab:coalition_and_layer} is an example of some coalitions and their corresponding layers for $M=4$. As $i$ approaches $\lfloor\frac{M}{2}\rfloor$, the weight assigned to the coalitions within that layer decreases. Consequently, the sampling procedure progresses incrementally, starting from lower-level layers and gradually moving towards higher-level layers. Specifically, this involves generating first the coalitions with $1$ or $M-1$ features included. If the budget is not exhausted, then coalitions with $2$ or $M-2$ features included are generated, and so on. 
\begin{table}[!htb]
    \centering
		
		\begin{tabular}{ccc}
			\toprule
                 Layer && Coalition \\ \midrule
			$1$ && $1\quad0\quad0\quad0$ \\ 
			$1$ && $0\quad1\quad1\quad1$  \\
			$2$ && $1\quad1\quad0\quad0$ \\
			$2$ && $0\quad0\quad1\quad1$ \\\bottomrule
			 
		\end{tabular}          
		\caption{Examples of coalitions and their corresponding layers for $M=4$. The first coalition is in Layer 1, as it contains only one feature. The second coalition is still in Layer 1 as it lacks only one feature. In the last two cases, two features are present or absent, hence the instances belong to Layer 2.}
		\label{tab:coalition_and_layer}
	\end{table}
Based on the allocated budget, a layer may be either fully enumerated or not. A layer is considered \textbf{complete} if all possible coalitions within that layer have been enumerated. Conversely, if there are coalitions from a layer that have not been generated, we will refer to it as an \textbf{incomplete} layer.
When focusing on a specific layer, there are two options: either fully explore that layer and then move on to the next layer, or switch to random sampling mode on all the other unexplored layers. For a layer to be fully explored, two conditions must be met: 
\begin{enumerate}
    \item The remaining budget must be sufficient to generate all the instances of that layer.
    \item The weight of the layer (sum of the weights of all instances within the layer) must be significant enough to justify its exploration. Specifically, the weighted probability of drawing an instance of that layer must be greater than the probability of randomly selecting any instance.
\end{enumerate} 
The third column in Table \ref{tab:sampling_process} provides an example of SHAP's layer-wise exploration for a budget of $1200$ coalitions, and illustrates how Kernel SHAP switches to random selection after layer 2, despite having enough budget to fully explore layer 3. The fourth column of the table will be explained in the {\em Improving Kernel SHAP’s Stability} section.

\begin{table}
\centering

    \begin{tabular}{ccC{1.2cm}C{2.2cm}}
        \toprule
        \textbf{Layer} & \textbf{Size} & \textbf{Kernel SHAP}          & \textbf{ST-SHAP {\footnotesize (Our method)}}\\ \midrule
        $1$            & $30$          & $30$                   & $30$                                       \\ \midrule
        $2$            & $210$         & $210$                  & $210$                                      \\ \midrule
        $3$            & $910$         & \multirow{5}{*}{$960$} & $910$                                      \\ \cmidrule(r){1-2} \cmidrule(l){4-4} 
        $4$            & $2730$        &                        & $50$                                       \\ \cmidrule(r){1-2} \cmidrule(l){4-4} 
        $5$            & $6006$        &                        & \multirow{3}{*}{$0$}                       \\ \cmidrule(r){1-2}
        $6$            & $10010$       &                        &                                            \\ \cmidrule(r){1-2}
        $7$            & $12870$       &                        &                                            \\ \bottomrule
        \end{tabular}
    
    \caption{Kernel SHAP and ST-SHAP (our method) neighbor selection process for $M=15$ and budget = $1200$. The column ``Size'' is the number of coalitions in the layer, whereas the columns ``Kernel SHAP'' and ``ST-SHAP'' denote the number of coalitions from a layer materialized by the two different generation processes. }
\label{tab:sampling_process}
\end{table}

Finally, similarly to LIME, SHAP offers the option to control the complexity of the explanation. This can be achieved by specifying the number of features to include, i.e. the number of non-zero coefficients provided by the linear explainer. This step ensures the {\em interpretability} of the explanation \cite{ribeiro2016should,10.1145/3411764.3445315}. 

To summarize, Kernel SHAP is a method for approximating the SHAP values that relies on a weighted linear regression. The regression coefficients are estimates of the SHAP values. The weighted linear model is learned from  a sample of the coalitions. The sampling procedure prioritizes lower-level layers whenever possible, and else draws randomly coalitions from higher-level layers. For each coalition, the prediction is obtained using a black-box predictive function $f_x$, and the weight is computed using the Shapley Kernel (Equation \eqref{eq:pi}). Finally, the weighted linear model is fitted, according to Equation \eqref{eq:wlr}, using the weighted coalitions and their corresponding predictions.

	\section{Reason for Instability}
        \label{sec:reasons_instability}
    In this section, we illustrate the instability of Kernel SHAP with an example shown in Figure \ref{fig:shap_instability} and present an hypothesis on the reason of this instability. In this figure, we use an instance of the Dry Bean dataset\footnote{Dry Bean Dataset. \url{ https://doi.org/10.24432/C50S4B}. Accessed: 2024-02-01.} and compute three explanations on the same instance with identical parameters. We note that the four relevant features are not the same across these three executions.
    
    To quantify stability, we compute explanations for the same instance multiple times under the same configuration and compute the  Jaccard coefficient of the sets of features with non-zero coefficients in the explanations.  
    If $S_1, \dots, S_n$ denote those sets, this corresponds to:
    \begin{equation}
        J(S_1, S_2, \cdots, S_n) = \frac{|S_1 \cap S_2 \cap \cdots \cap S_n|}{|S_1 \cup S_2 \cup \cdots \cup S_n|}
        \label{eq:jaccard}
    \end{equation}
    The coefficient ranges between 0 and 1, and values closer to 1 denote better stability.
     
     In our previous example, the Jaccard coefficient obtained for $20$ explanations (on the same instance) is $0.74$ (with a budget set to $200$). Further results on the instability of Kernel SHAP are presented in the \textit{Experiments} section. 

 \begin{figure*}[!htb]
     \centering
    \subfloat[Input instance]{\includegraphics[scale=.2]{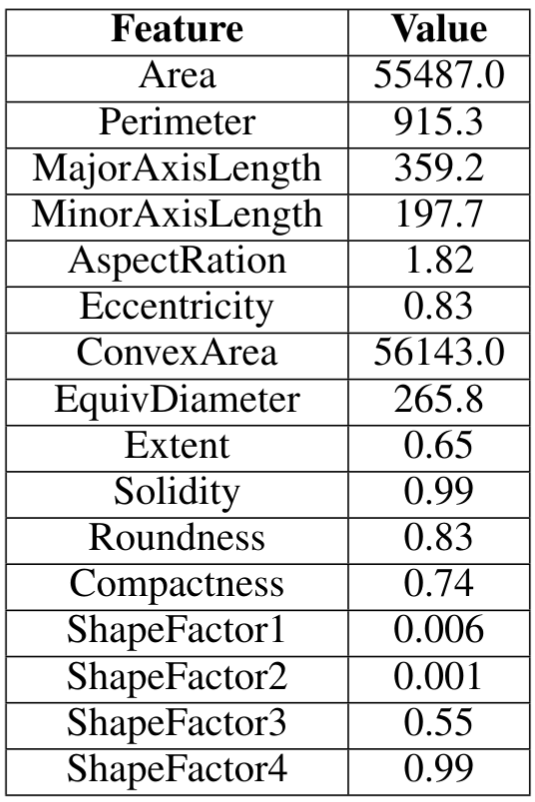}}
    \hspace{.7cm}
    \subfloat[Execution 1]{\includegraphics[scale=.35]{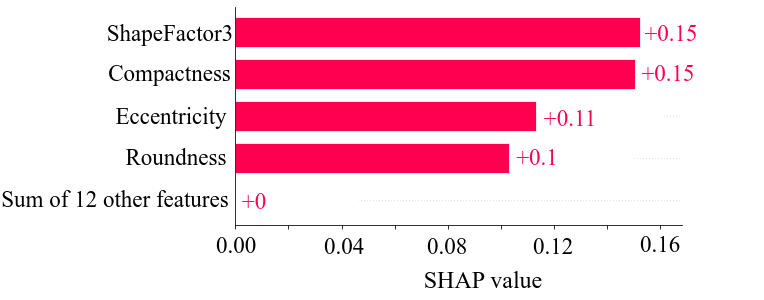}}
    
    \subfloat[Execution 2]{\includegraphics[scale=.35]{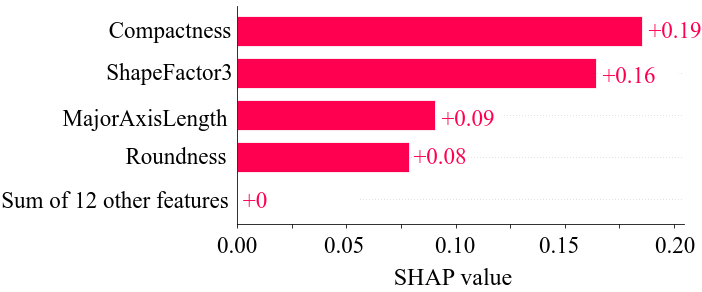}}\hfill
    \subfloat[Execution 3]{\includegraphics[scale=.35]{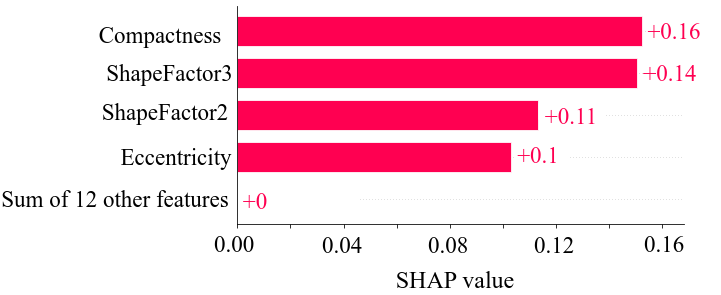}}
    
    \caption{Illustration of the instability of \textsc{SHAP} on an instance from the Dry Bean dataset. The explanation size is 4, so each explanation computation yields four features with non-zero values.}
\label{fig:shap_instability}
 \end{figure*}

    We believe that the main source of instability in Kernel SHAP explanations lies in the way coalitions are generated. 
    As seen in the previous section, Kernel SHAP's neighbor selection strategy introduces a lot of randomness in the process. 
    The generation of coalitions used to fit the linear model involves layer-wise filling whenever possible, followed by random sampling (see Table \ref{tab:sampling_process}). In practice only the lower-level layers are fully explored, whereas the higher layers are (sub-)sampled from a large and diverse population based on the remaining budget.
    Those samples can be very different across multiple executions of Kernel SHAP on the same instance. We believe that this stochastic process is the primary source of Kernel SHAP's instability. 
		
	\section{Improving Kernel SHAP’s Stability}
        \label{sec:contrib1}

	Our goal is to reduce the variability in explanations across multiple executions of Kernel SHAP on the same instance, and if possible, to achieve complete determinism. To do so, we propose to drop Kernel SHAP's condition $2$ when filling a layer (see Section \textit{Background}, subsection \textit{Kernel SHAP}, condition $2$ to fully explore a layer), that is, fill the layers in increasing order as long as there is still enough budget. Unlike Kernel SHAP, if the budget is not sufficient to fill a layer, random generation only takes place in that layer -- instead of all the remaining layers. Then, to eliminate variability completely, we can set the budget in a way that only complete layers are considered.
 
Let us consider the case where $M = 15$ depicted in Table~\ref{tab:sampling_process}, which illustrates Kernel SHAP's neighbor generation process for 7 layers. To generate elements of the first layer, we consider the cases where only one feature is included and the cases where only 1 feature is excluded. We can easily enumerate all possible coalitions within this layer, resulting in exactly $2 \cdot C_{15}^1 = 30$ coalitions.
If the allocated budget is $30$, we can be sure that the same 30 coalitions will be generated every time we run SHAP. In that line of thought, budgets of $30$ or $240$ ($30+210$) guarantee full determinism in SHAP because they entail complete layers.

Now, if the budget is $1200$ as in the example, Kernel SHAP will saturate layers 1 and 2 (with $30$ and $210$ neighbors respectively), which leaves us with a remaining budget of $1200 - 210 - 30 = 960$ instances. Kernel SHAP will sample those instances from all the remaining layers, whereas we propose to saturate layer $3$ with $910$ neighbors and sample the remaining $50$ neighbors randomly from layer $4$. As our experiments show, such an approach, that we call \textbf{ST-SHAP} (\textit{Stable SHAP}), mitigates the deleterious effects of sub-sampling in the stability of the explanations. This is so because we restrict the random sampling of neighbors to a smaller and less diverse universe, i.e., layer $4$ in ST-SHAP vs. layers $3$ to $7$ in Kernel SHAP.

We will also show the benefits of setting the number of neighbors in a way that allows only complete layers, thereby completely eliminating variability.
	\section{Experiments}
        \label{sec:contrib2}

    We conduct a series of experiments to verify the effectiveness of ST-SHAP at mitigating the instability of Kernel SHAP.
    \subsection{Experimental Protocol}
    \subsubsection{Environment.}
    The experiments are conducted on a computer with a 12th generation Intel-i7 CPU (14 cores with HT, 2.3-4.7 GHz Turbo Boost).  
    We use the official Kernel SHAP implementation provided by the authors~\cite{lundberg2017unified}. ST-SHAP\footnote{\url{https://github.com/gwladyskelodjou/st-shap}} is based on this implementation. We use the scikit-learn python library\footnote{\url{https://scikit-learn.org/}} to fit the black-box models that we aim to explain. 
    \subsubsection{Datasets.} 
    We use several datasets from the UCI Machine Learning Repository and other popular real-world datasets, namely: Boston \cite{harrison1978hedonic} and Movie,
    which are regression datasets, and Default of Credit Card Clients ~\cite{misc_default_of_credit_card_clients_350}, Adult~\cite{misc_adult_2}\footnote{We use a reformatted version released by the authors of SHAP.},  Dry Bean\footnote{\url{https://doi.org/10.24432/C50S4B}. Accessed: 2024-02-01.}, Spambase \cite{spambase}, HELOC\footnote{\url{https://community.fico.com/s/explainable-machine-learning-challenge?tabset-158d9=3}. Accessed: 2024-02-01.}, and Wisconsin Diagnostic Breast Cancer \cite{wdbcdataset}, which are classification datasets. Table \ref{tab:datasets} summarizes the information about the experimental datasets.

    \begin{table}[t]
    \fontsize{10}{12}\selectfont
    \resizebox{\linewidth}{!}{
        \begin{tabular}{ccccC{1.2cm}} 
            \toprule
            Dataset      & Size  & \#features & Task        & {\small Explanation size} \\ \midrule
            Movie        & 505   & 19        & Regression  & 4   \\ 
            Boston       & 506   & 13        & Regression    & 4  \\
            Dry Bean     & 13611 & 16        & Classification & 4 \\
            Credit Card       & 30000 & 23        & Classification & 4\\
            Adult   & 32561 & 12        & Classification & 4 \\
            Spambase & 4601 & 57 & Classification & 4 \\
            HELOC  & 10459 & 23 & Classification & 4 \\
            WDBC  &  569 & 30 &  Classification & 4 \\ \bottomrule
        \end{tabular}
        }
        \caption{Summary of datasets used in our experiments. The {\em Explanation size} indicates the number of non-zero coefficients returned by the linear model of the explanation. 
        These are the most significant features associated to a particular prediction (the absolute value of each coefficient indicates the magnitude of that importance). 
        }
	\label{tab:datasets}
    \end{table}

    \subsubsection{Black-Box Models.} 
    We also use various black-box models, including Support Vector Machine (SVM), Random Forest (RF), Logistic Regression (LR), and Multi-layer Perceptron (MLP). For each dataset, we randomly select 10 instances from the test set. Finally, we compute the explanation 20 times for each explained instance
    to derive the corresponding Jaccard value that estimates stability.
    \subsubsection{Metrics.} 
    We evaluate stability by measuring the evolution of the Jaccard coefficient (Equation~\eqref{eq:jaccard}) as we change the budget. We test budgets that lead to complete layers for layers $1$ to $3$, as well as budgets that do not: \{$10$, $20$, $30$, $40$, $50$, $60$, $70$, $80$, $90$, $100$, $200$, $500$, $1000$, $2000$\}. 

\subsection{Results}
    \begin{figure*}[!htb]

     \centering
    \subfloat[Boston]{\includegraphics[scale=0.15]{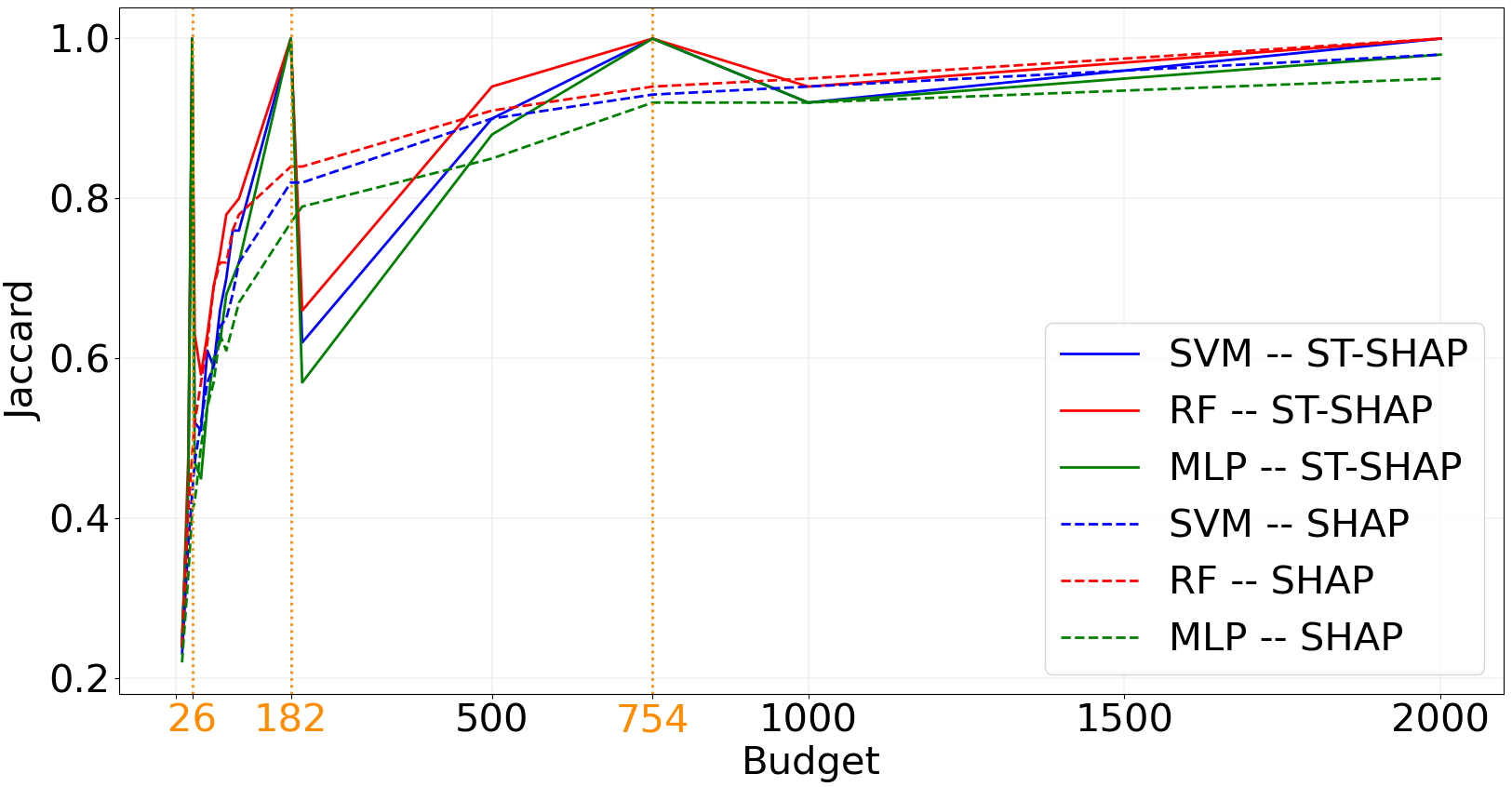}}
    \hspace{.1cm}
    \subfloat[Dry Bean]{\includegraphics[scale=0.15]{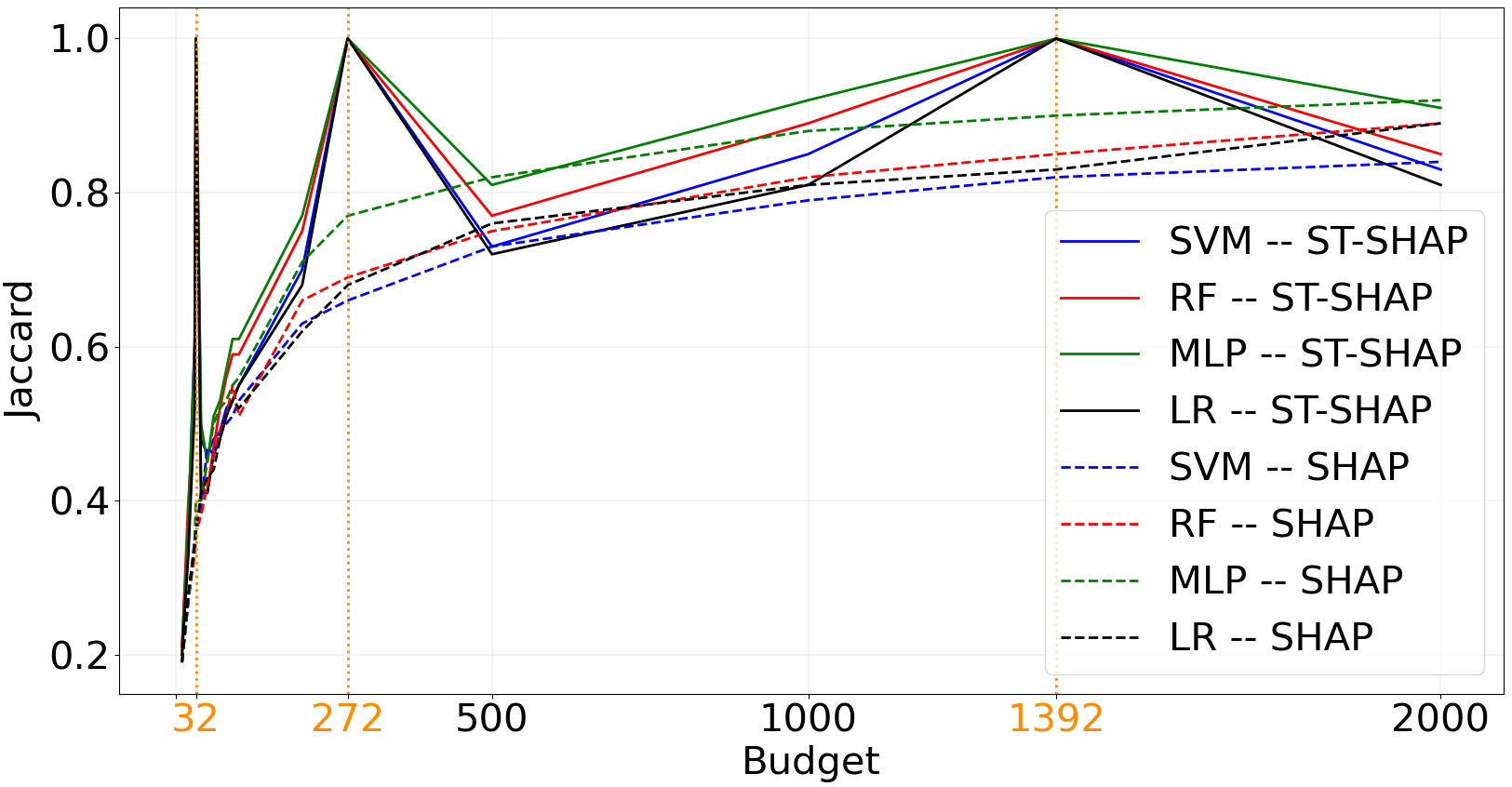}}
    
    \subfloat[HELOC]{\includegraphics[scale=0.15]{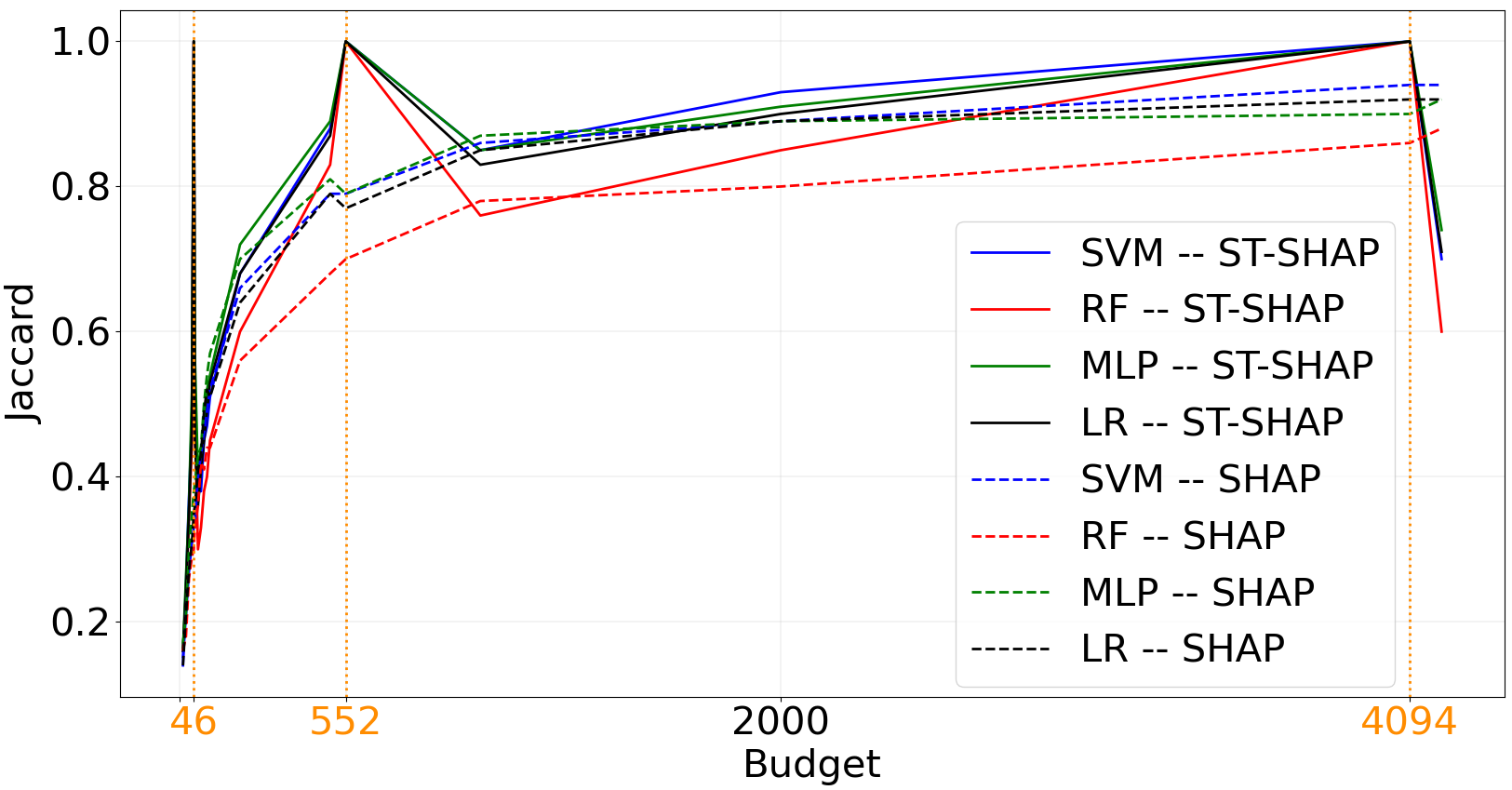}}
    \hspace{.1cm}
    \subfloat[Spambase]{\includegraphics[scale=0.15]{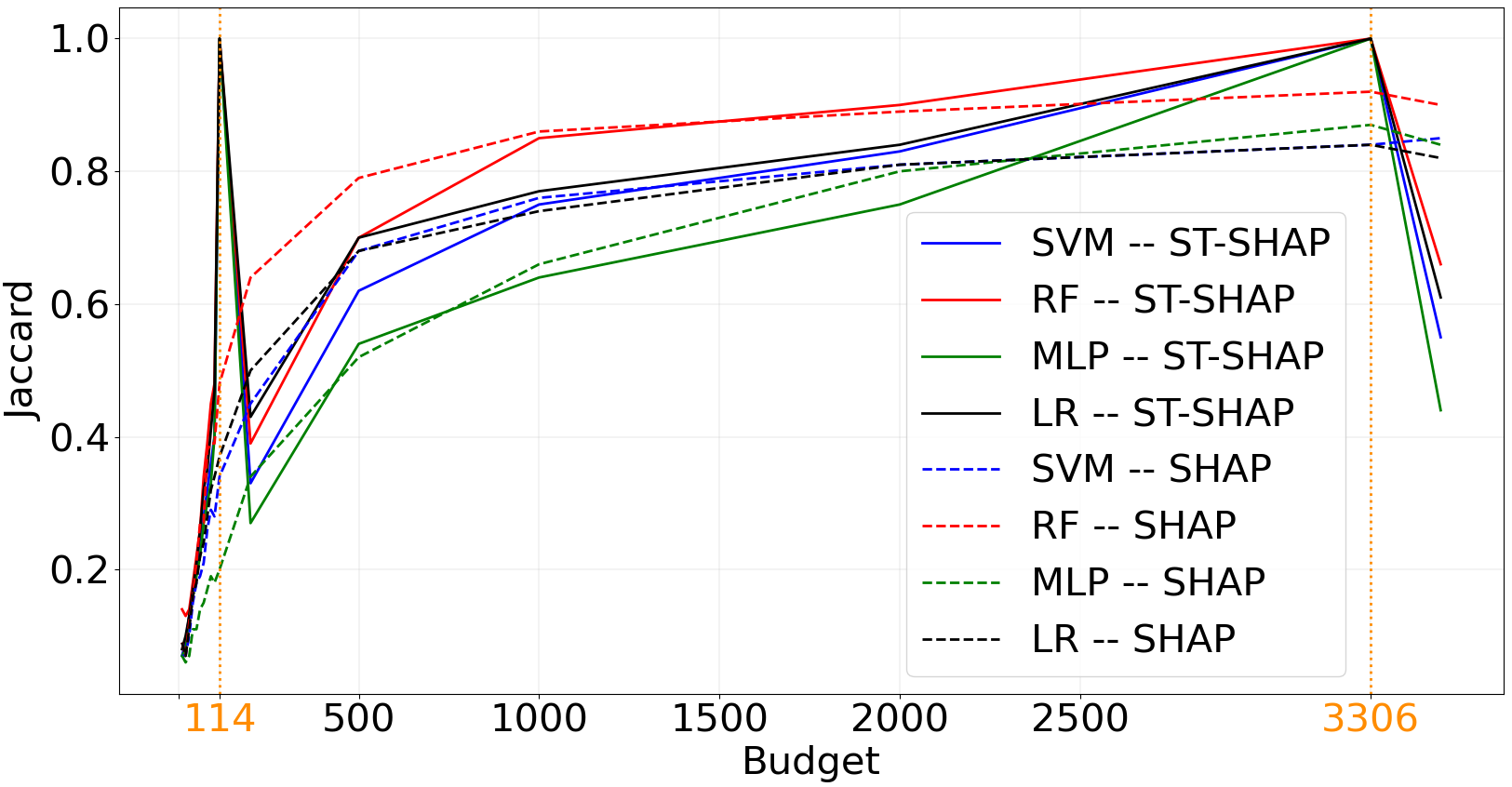}}
    
    \caption{Evolution of the Jaccard coefficient for SHAP and ST-SHAP on Boston, Dry Bean, HELOC, and Spambase datasets. The vertical lines represent the budgets that result in a complete layer.}
\label{fig:jaccard_datasets}
 \end{figure*}
For the sake of conciseness, we refer to Kernel SHAP as SHAP in all the charts. 
        Figure~\ref{fig:jaccard_datasets}a presents the results obtained on the Boston dataset. 
        We report the average Jaccard coefficient among the 10 instances for each of the studied budgets.
        Layers 1, 2 and 3 become complete when the budgets are $26$, $182$ and $754$ respectively. We can see that, for ST-SHAP, the Jaccard coefficient equals $1.0$ when the budget corresponds to complete layers. This indicates a complete absence of variability in the process to compute the explanation. We also notice that ST-SHAP's stability outperforms Kernel SHAP's as we approach budgets with complete layers.
        In addition to these observations, we highlight in Figure \ref{fig:jaccard_datasets}b with the Dry Bean dataset that ST-SHAP is generally more stable than Kernel SHAP, even on incomplete layers. Figure \ref{fig:jaccard_datasets}d showcases results for Spambase dataset, a larger dataset. For complete layers, only layers 1 and 2 have been calculated, as layer 3 requires a budget of $61826$ coalitions. Budgets for incomplete layers remain unchanged, except with the addition of a budget of $3500$ coalitions. We observe the same trends as in the previous datasets.
        The results presented here cover one regression and three classification datasets. The results for the other datasets can be found in \cite[Appendix~2]{kelodjou2024shapingupshap}. Similar observations apply to these datasets.
        
        Since stable explanations of low fidelity are as useless as unstable explanations, 
        we examine the impact of our method on the fidelity of the explanations. 
        To assess fidelity, we compare the outputs of the black-box model with those of the Kernel SHAP and ST-SHAP linear approximation models. For this purpose, we use the $R^2$ statistic for regression tasks (using the Boston and the Movie datasets) and the {\em accuracy} for classification tasks (for other datasets). In both cases the metrics assess the quality of the linear approximation provided by Kernel SHAP or ST-SHAP at ``adhering'' to the black box's outcomes -- so the higher they are the better. This form of fidelity is referred to as \emph{adherence} in the literature~\cite{visani2020optilime, gaudel2022s}.

    \begin{figure*}[!htb]

     \centering
    \subfloat[Boston]{\includegraphics[scale=.11]{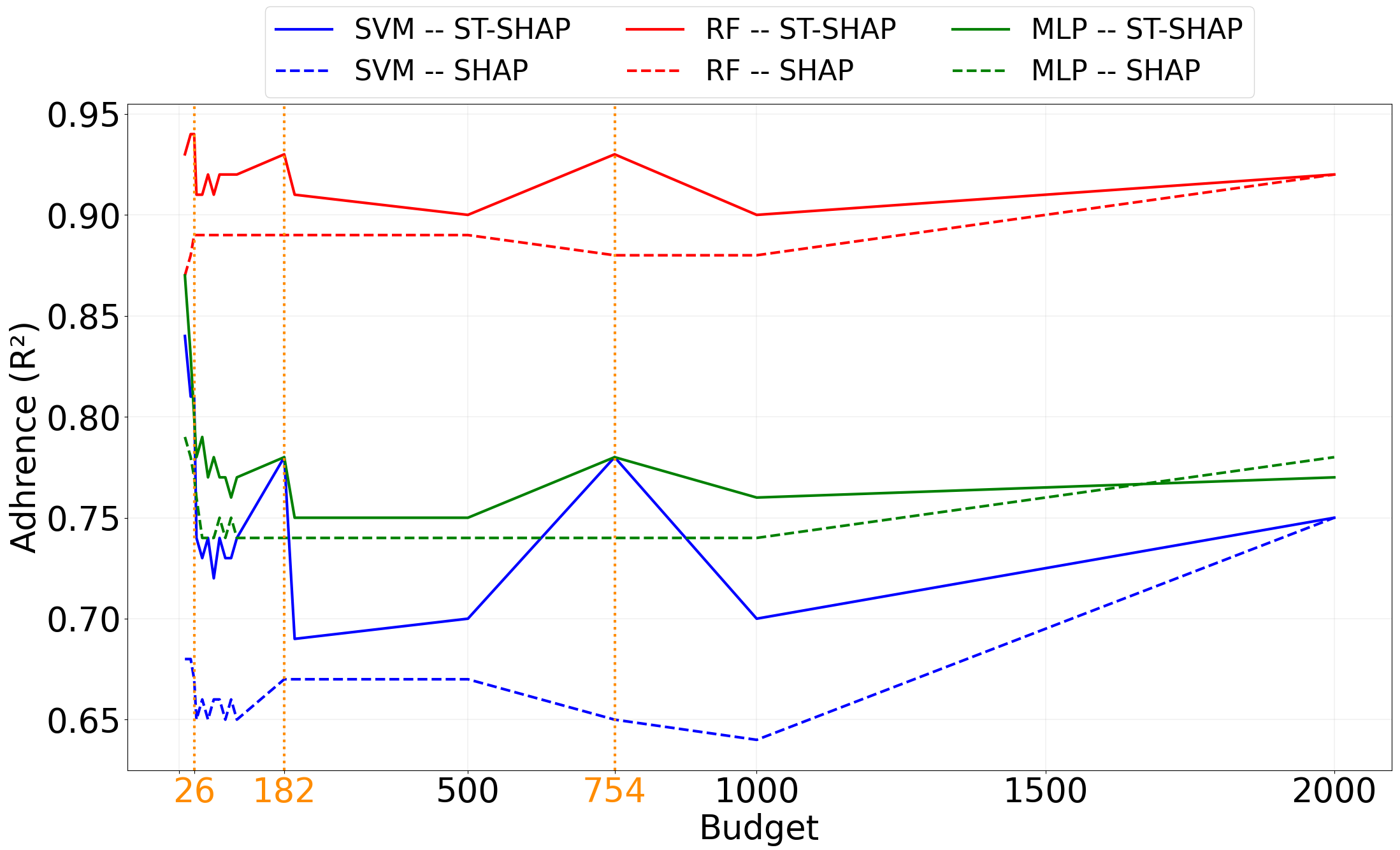}}
    \hspace{.08cm}
    \subfloat[Dry Bean]{\includegraphics[scale=.11]{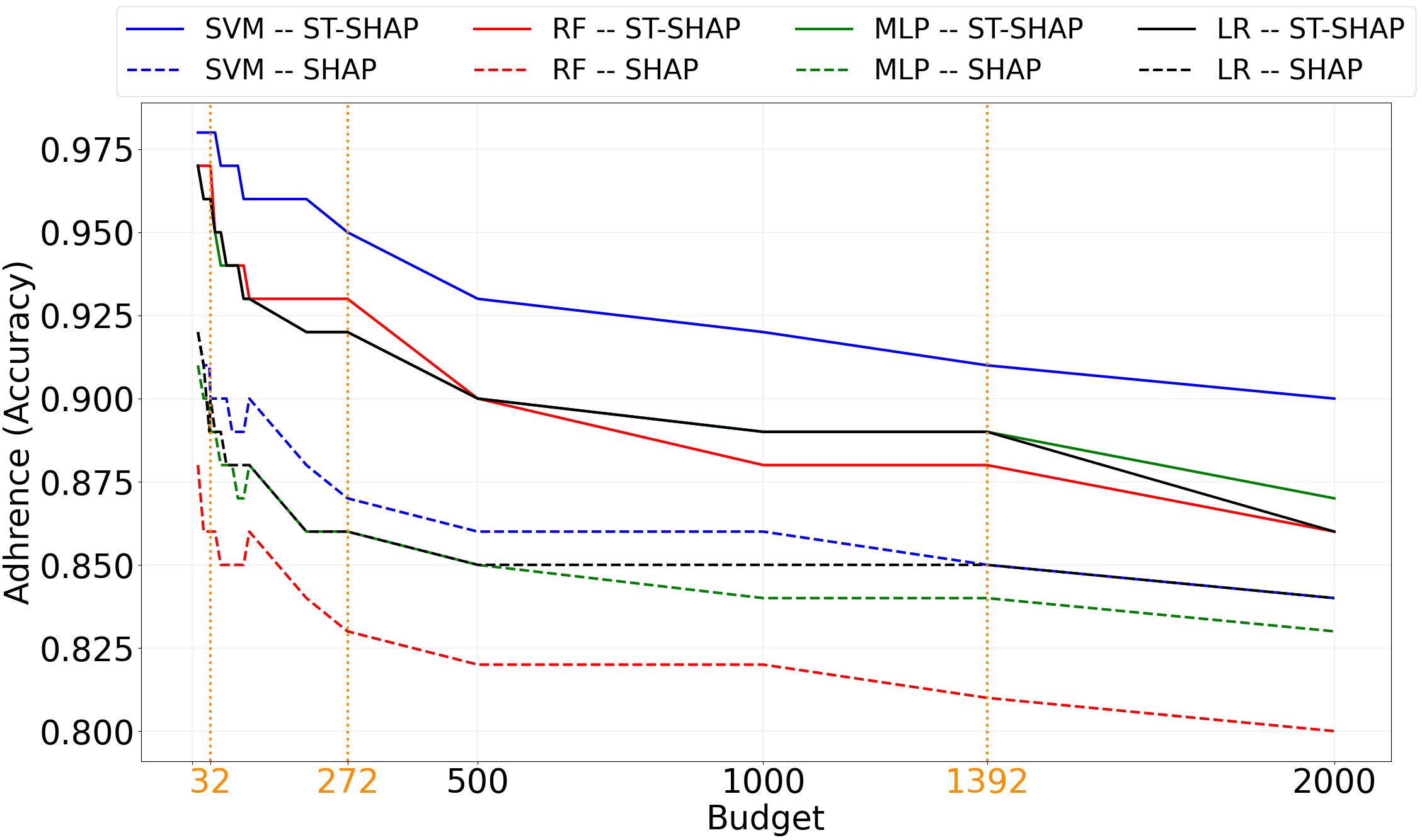}}
    
    \subfloat[HELOC]{\includegraphics[scale=.11]{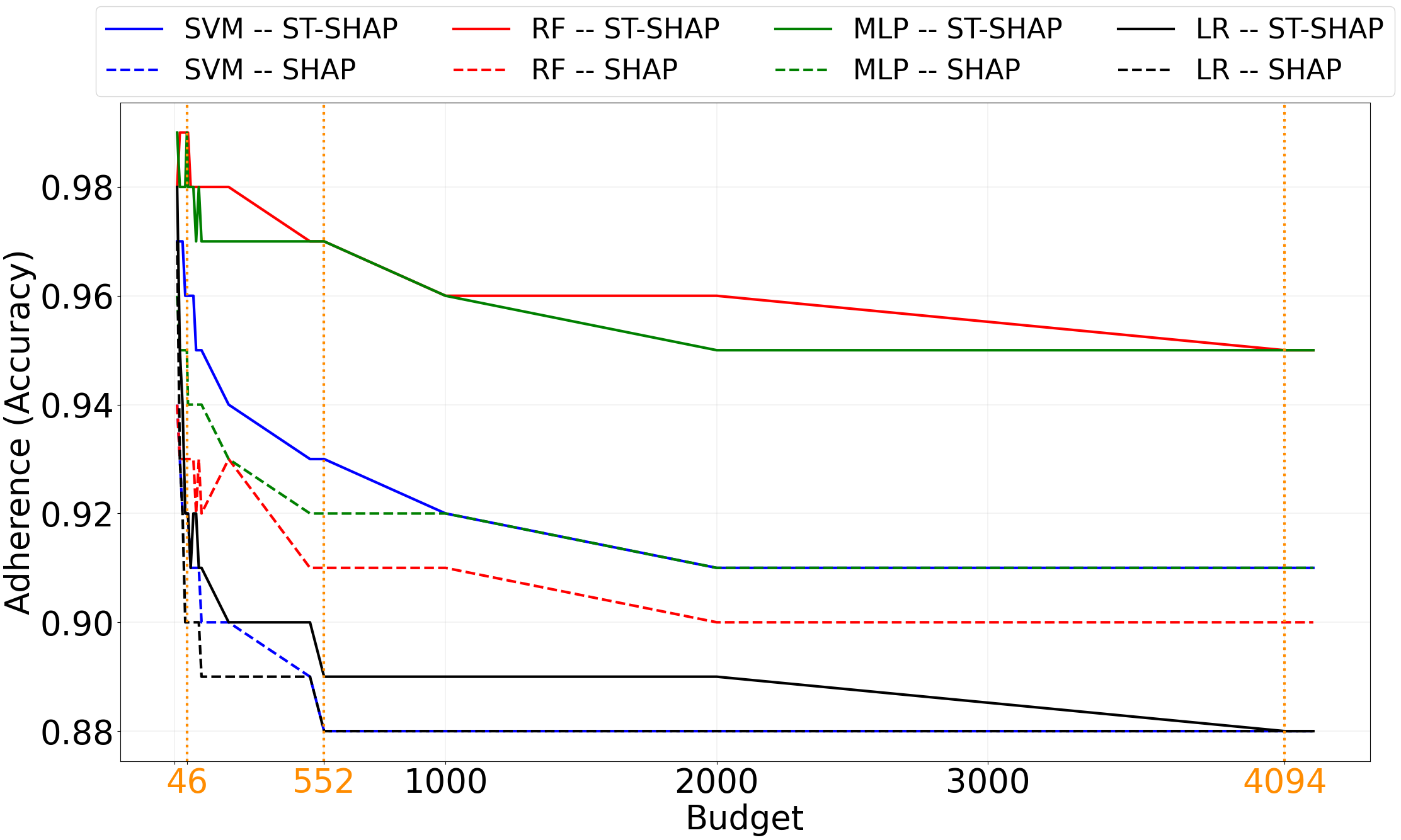}}
    \hspace{.08cm}
    \subfloat[Spambase]{\includegraphics[scale=.11]{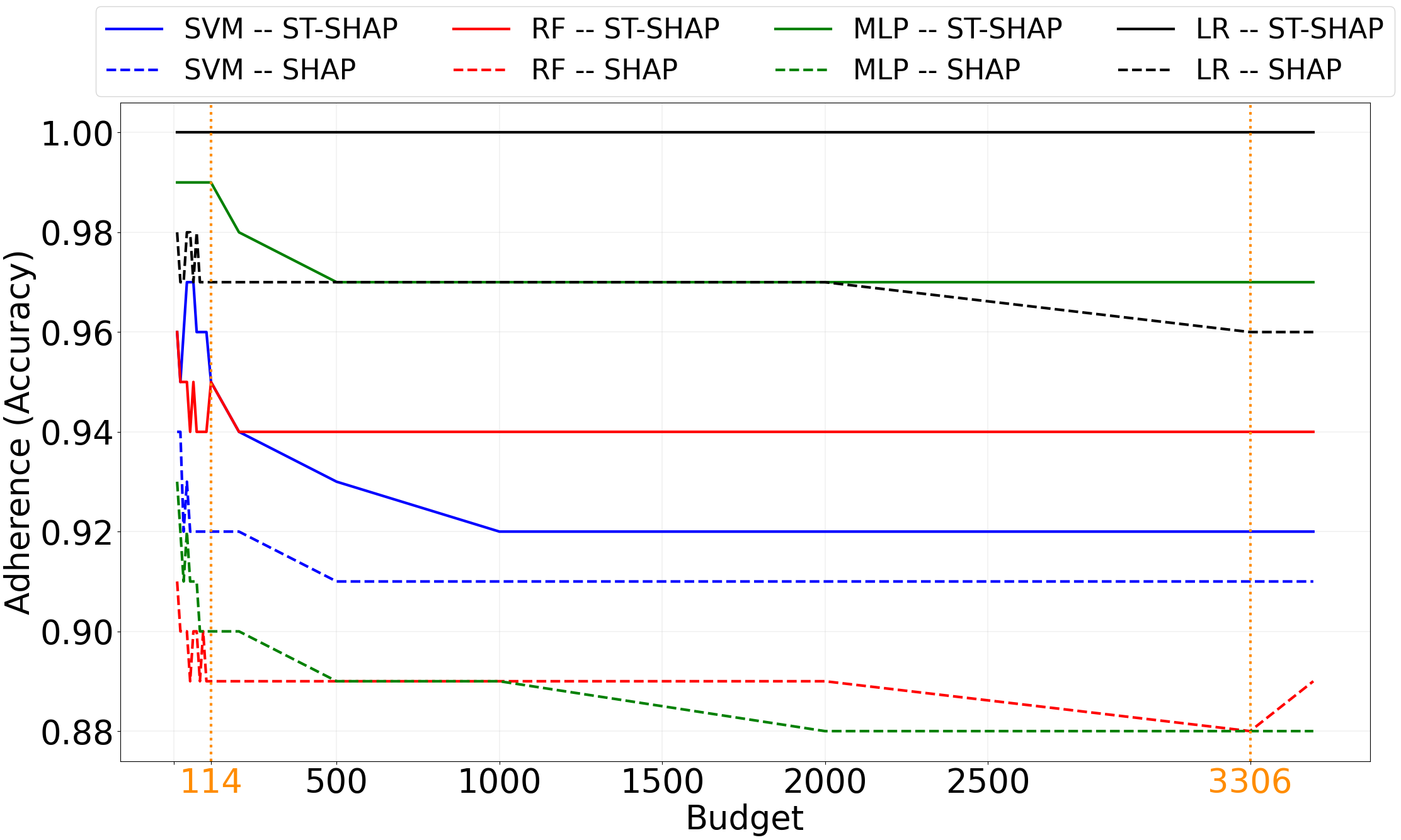}}
    
    \caption{Adherence ($R^2$-score for Boston and Accuracy for others) vs. budget size for Boston, Dry Bean, HELOC, and Spambase datasets.}
    \label{fig:adherence_datasets}
    \end{figure*}

        Figure \ref{fig:adherence_datasets}a is produced by taking the average $R^2$-score obtained for each budget.
        Each $R^2$-score is obtained by comparing the results of Kernel SHAP (resp. ST-SHAP) on the set of coalitions used to train the linear explainer. From this figure, it can be observed that the fidelity of ST-SHAP is comparable to Kernel SHAP's. It is also noteworthy that the fidelity is generally high, even for small budgets (e.g., by considering only the neighbors in Layer 1). Figure \ref{fig:adherence_datasets}b illustrates the fidelity comparison between Kernel SHAP and ST-SHAP on the Dry Bean dataset. The results demonstrate a generally good fidelity for both methods, with ST-SHAP sometimes outperforming Kernel SHAP. 
       Furthermore, for both methods, fidelity tends to decrease with an increasing budget. This trend can be attributed to the fact that, as the budget increases, we sample neighbors from upper layers, which confronts the linear regression with a more diverse set of coalitions. 
       Depending on the black-box, this diversity can make the optimization problem more challenging, resulting in lower fidelity compared to a regression trained on the first layers. In Figure \ref{fig:adherence_datasets}d, where we present the results for the Spambase dataset, it can be observed that ST-SHAP generally exhibits better fidelity than Kernel SHAP.
       
        The results for the remaining datasets are provided in \cite[Appendix~2]{kelodjou2024shapingupshap}. 
        These datasets yield the same conclusions. 
        Additional experiments comparing the exact SHAP values with the coefficients obtained with ST-SHAP show that the approximation of the SHAP values provided by ST-SHAP is as good as that of Kernel SHAP \cite[Appendix~2]{kelodjou2024shapingupshap}. 
        
        Our experiments on stability and fidelity lead us to conclude that our layer-wise strategy to complete layers of coalitions in Kernel SHAP reduces variability satisfactorily, leading to stability. This stability is maximal (Jaccard score equals 1.0) when the budget guarantees fully complete layers.  Moreover, our methods do not have an impact on the adherence of the explanations. This means that users can now choose which layers to use depending on the available time and computing resources, since a larger number of coalitions translates into longer execution times.       
        After examining the stability and fidelity results, we note that learning a surrogate on the coalitions of \textbf{Layer 1} not only offers complete stability and interesting fidelity, but it is also very fast to compute because there are only $2M$ of such coalitions (where $M$ is the total number of features). This makes Layer 1 a potentially interesting choice when learning attribution scores. In the following, we conduct a theoretical study of the attribution scores obtained with such a strategy.

        \section{First Layer Attribution Analysis}
        \label{sec:contrib3}
        
    In this section, we investigate the attribution values obtained using exclusively the coalitions from Layer 1.
    These attribution values are of particular interest because they are the cheapest to compute and guarantee stable explanations. 
    We first characterize these attribution values, and demonstrate their good theoretical properties, then we show to which extent they align with the exact SHAP values on real datasets, and finally we compare the execution times.  

    \subsection{Theoretical Analysis}

        In order to simplify the notations, we denote by $N=\{1,\cdots,M\}$ the set of all features and we define $f$ on $\mathcal{P}(N)$ as shown by Equation \eqref{eq:fext}.
        $\mathds{1}_S\in \{0,1\}^M$ is defined by $(\mathds{1}_S)_i=1$ if $i\in S$ and 0 otherwise, with $S$ being a coalition of features: 
        \begin{equation}
            \label{eq:fext}
            \begin{array}{rl}
                f: \mathcal{P}(N) &\rightarrow \mathbb{R}\\
                    S & \mapsto f_x({\mathds{1}}_S).
            \end{array}
        \end{equation}

        Coalitions of Layer 1 are of two types: coalitions with a single feature present (a single $1$) and coalitions with a single feature absent (a single $0$). In terms of marginal contribution, this amounts to averaging the individual contribution: $f(\{j\})-f(\emptyset)$, and the effect of removing the feature from the input: $f(N)-f(N\backslash \{j\}), \; \forall j \in N$.
    
        By also satisfying the {\em local accuracy} constraint \cite{lundberg2017unified}, i.e., $\sum^M_{i=1}{\phi_i} = f(x) - f(\emptyset)$,  we obtain Theorem~\ref{th:layer1_values}, which expresses a closed form for the attribution values of ST-SHAP when restricted to Layer 1. 

        \begin{theorem}[Attribution values with Layer 1]\label{th:layer1_values}
        For any feature $j$, the attribution value $\phi_j$ computed by ST-SHAP when filling exactly Layer 1 is: 
        \begin{equation}
                   \phi_j =  \tilde\phi_j
                     + \frac{1}{M}\left(f(N) - f(\emptyset)
                    - \sum_{i=1}^M \tilde\phi_i
                    \right)
                    \label{eq:valuePhi}
                \end{equation}
                where 
                for any $i$, $\tilde\phi_i=
                \frac{f(\{i\})- f(\emptyset) +
                f(N) - f(N\backslash\{i\})
                }{2}$.   
        \end{theorem}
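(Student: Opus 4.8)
The plan is to work directly with the weighted least-squares problem of Equation~\eqref{eq:wlr}, restricted to the coalitions that ST-SHAP materializes when it fills exactly Layer~1. First I would enumerate those coalitions: they are the $M$ singletons $\mathds{1}_{\{j\}}$ and the $M$ co-singletons $\mathds{1}_{N\backslash\{j\}}$ for $j\in N$. Substituting $|z'|=1$ and $|z'|=M-1$ into the Shapley kernel of Equation~\eqref{eq:pi} gives $\pi_x=\tfrac1M$ in both cases, so all $2M$ materialized coalitions carry the same finite weight, which therefore factors out of the objective. Keeping in mind that the infinite weights on $|z'|\in\{0,M\}$ still impose the two side constraints $\phi_0=f(\emptyset)$ and $\sum_{i=1}^M\phi_i=f(N)-f(\emptyset)$, minimizing Equation~\eqref{eq:wlr} becomes equivalent to minimizing
\[
L(\phi)=\sum_{j=1}^M\bigl(\phi_0+\phi_j-f(\{j\})\bigr)^2+\sum_{j=1}^M\Bigl(\phi_0+\textstyle\sum_{i\neq j}\phi_i-f(N\backslash\{j\})\Bigr)^2
\]
over $(\phi_1,\dots,\phi_M)$ subject to $\sum_i\phi_i=f(N)-f(\emptyset)$, with $\phi_0$ fixed to $f(\emptyset)$.

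The crucial step is to exploit the sum constraint to decouple the co-singleton terms. On the feasible set, $\sum_{i\neq j}\phi_i=\bigl(f(N)-f(\emptyset)\bigr)-\phi_j$, so, together with $\phi_0=f(\emptyset)$, the $j$-th co-singleton residual collapses to $f(N)-f(N\backslash\{j\})-\phi_j$, while the $j$-th singleton residual is $\phi_j-\bigl(f(\{j\})-f(\emptyset)\bigr)$. Writing $a_j=f(\{j\})-f(\emptyset)$ and $b_j=f(N)-f(N\backslash\{j\})$ and completing the square term by term, $(\phi_j-a_j)^2+(\phi_j-b_j)^2=2\bigl(\phi_j-\tfrac{a_j+b_j}2\bigr)^2+c_j$ where $c_j$ does not depend on $\phi$, and $\tfrac{a_j+b_j}2$ is exactly the quantity $\tilde\phi_j$ of the statement. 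Hence, on the feasible set and up to an additive constant, $L(\phi)=2\sum_{j=1}^M(\phi_j-\tilde\phi_j)^2$.

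It remains to minimize $\sum_{j=1}^M(\phi_j-\tilde\phi_j)^2$ subject to $\sum_j\phi_j=f(N)-f(\emptyset)$, which is the orthogonal projection of $\tilde\phi$ onto an affine hyperplane. Using a Lagrange multiplier (or the standard closed form for this projection) gives $\phi_j=\tilde\phi_j+\lambda$ with $\lambda$ fixed by the constraint, i.e. $\lambda=\tfrac1M\bigl(f(N)-f(\emptyset)-\sum_{i=1}^M\tilde\phi_i\bigr)$, which is precisely Equation~\eqref{eq:valuePhi}. I expect the main obstacle to be the bookkeeping around the side constraints rather than any hard computation: one must argue that although Layer~1 contains neither $\emptyset$ nor $N$, the constraints $\phi_0=f(\emptyset)$ and $\sum_i\phi_i=f(N)-f(\emptyset)$ are nonetheless active (they are hard-wired into Kernel SHAP through the infinite kernel weights, as recalled in the \textit{Background} section), and one should note that the reduced problem is a strictly convex quadratic on the hyperplane, so its minimizer -- and hence the ST-SHAP attribution -- is unique.
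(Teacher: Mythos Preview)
Your proposal is correct and follows essentially the same route as the paper: restrict the weighted least-squares problem to the $2M$ equally weighted Layer~1 coalitions, impose the two side constraints $\phi_0=f(\emptyset)$ and $\sum_i\phi_i=f(N)-f(\emptyset)$, and solve the resulting constrained quadratic with a Lagrange multiplier. Your completing-the-square step, which identifies $\tilde\phi_j$ directly and recasts the remaining problem as the orthogonal projection of $\tilde\phi$ onto an affine hyperplane, is a slightly more streamlined presentation than the paper's explicit expansion and derivative computation, but the underlying argument is the same.
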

        
         \begin{proof}
         We provide below a summary of the main steps of the proof, with some intermediate steps omitted. For a detailed proof of this theorem refer to \citet[Appendix~1]{kelodjou2024shapingupshap}.

       Determining the attribution values $\phi_j$ when restricting to coalitions of layer 1 is equivalent to solving the optimization problem formulated in Equation \eqref{eq:new_wlr}. The weighting kernel $\pi_x$ w.r.t Equation \eqref{eq:wlr} is omitted because all coalitions in layer 1 have the same weight. The formulation of the minimization function then becomes:
        \begin{equation}
        \textit{arg}\min_{\phi} \sum_{z'\in \{0,1\}^M}(g(z') - f_x(z'))^2.
        \label{eq:new_wlr}
        \end{equation}
        
        Coalitions $z'$ of layer 1 are characterized by $|z'| = 1$ or $|z'| = M-1$.
        \begin{itemize}
            \item When $|z'|=1$ with $j \in N$ being the only present feature, we obtain the equation:  \[g(z') = \phi_0 + \phi_j.\] 
            We aim to determine $g$ such that $g(z')$ provides a close approximation to $f(\{j\})$, i.e., $g(z') \approx f_x(z')$ where $f_x(z')=f(\{j\})$.
            \item When $|z'| = M-1$, with $j \in N$ being the only absent feature, we get: \[ g(z') = \phi_0 +  \sum_{\substack{i=1\\i\neq j\\ \phantom{-}}}^M \phi_{i}.\]
            We aim to determine $g$ such that $g(z')$ provides a close approximation to $f(N\setminus\{j\})$, i.e., $g(z') \approx f_x(z')$ where $f_x(z')=f(N\setminus\{j\})$.
        \end{itemize}
        
        \noindent Given the following terms:
        \begin{itemize}
            \item $f(\{j\}) - f(\emptyset)$ denoted by $\Delta_{j\emptyset}$
            \item $f(N\setminus \{j\}) - f(\emptyset)$ denoted by  $\Delta_{\bar{j}\emptyset}$
            \item $f(N) - f(N\setminus \{j\})$ denoted by $\Delta_{N\bar{j}}$
            \item $f(N) - f(\emptyset)$ denoted by $\Delta$,
        \end{itemize}
        \noindent this is tantamount to solve a minimization problem on the $\phi_j$ scores with two constraints: local accuracy ( $\sum_{i=0}^M\phi_i = f(N)$), and missingness ($\phi_0 = f(\emptyset)$). The minimization problem is defined by:
        \begin{equation}
        \textit{arg}\min_{\phi} \mathcal{L}(\bm{\phi})
        \end{equation}
        where $\mathcal{L}(\bm{\phi})$ is defined by:
        \begin{equation}
            \scalemath{0.8}{
        \mathcal{L}(\bm{\phi}) = \sum_{j=1}^M \left(\phi_j- \Delta_{j\emptyset}\right)^2 + \sum_{j=1}^M{\left(\left(\sum_{\substack{i=1\\i\neq j\\ \phantom{-}}}^M \phi_{i}\right)- \Delta_{\bar{j}\emptyset}\right)^2}
        }
        \end{equation}
        \begin{equation}
            \scalemath{0.8}{
                = \sum_{j=1}^M \left(2\phi_j^2 - 2(\Delta_{N\bar{j}} + \Delta_{j\emptyset})\phi_j + \Delta_{N\bar{j}}^2 + \Delta_{j\emptyset}^2  \right).
        }
        \end{equation}
        
        \noindent Because of the additional constraints (local accuracy), solving this problem requires using the method of the Lagrange multiplier, which leads us the following alternative optimization problem:
        \begin{equation}
            \textit{arg}\min_{\bm{\phi}, \lambda} \mathcal{L'}(\bm{\phi}, \lambda) = \mathcal{L}(\bm{\phi}) + \lambda(\sum_{j=1}^{M}{\phi_j} - \Delta).
        \end{equation} 
        This function is minimized when 
        \[
           \frac{\partial \mathcal{L}'}{\partial \bm{\phi}} = 0 \;\;\; \text{and} \;\;\; \frac{ \partial \mathcal{L}'}{\partial \lambda} = 0.
        \]
        For a given $\phi_j$ the new equations are:

        \begin{align}
        &\frac{\partial L'}{\partial \phi_j} = 0 = 4\phi_j -2\Delta_{N\bar{j}} -2\Delta_{j\emptyset} + \lambda \label{eq:phi}
        \\          
        & \frac{\partial L'}{\partial \lambda} = 0 = \Delta - \sum_{j=1}^M{\phi_j}.
       \end{align}
        
        \noindent Adding up the $M$ variants of Equation~\eqref{eq:phi}, we can solve for $\lambda$:
        
        \begin{equation}          
                 \lambda = \frac{\sum_{i=1}^M{(2\Delta_{N\bar{i}} + 2\Delta_{i\emptyset} ) - 4\sum_{i=1}^M{\phi_i}}}{M} \label{eq:lambda}.
        \end{equation}
        We can now plug Equation~\eqref{eq:lambda} into Equation~\eqref{eq:phi}, leading to our definition of $\phi_j$ for any $j\in N$: 
        \begin{equation}
           \phi_j =  \tilde\phi_j
                + \frac{1}{M}\left(f(N) - f(\emptyset)
                - \sum_{i=1}^M \tilde\phi_i
                \right).
        \end{equation}
         \end{proof}
        
        It can be noted that with such a formula, the three properties: Linearity, Symmetry, and Efficiency are verified \cite[Appendix~1]{kelodjou2024shapingupshap}. Therefore, the proposed attribution method belongs to the \textsc{LES} family of attribution scores ~\cite{ruiz1998family, chameni2008linear, radzik2013family}, which also includes the Shapley values. \textsc{LES} values are based on marginal contributions, providing feature contributions and interpretations very close to the Shapley values~\cite{condevaux2022fair}.

            \subsection{Layer 1 versus Shapley Values}

            We observed that the attribution values derived solely from the coalitions within Layer 1 are more straightforward to calculate and fulfill the three axioms of the LES family of attribution scores, also called LES values \cite{ruiz1998family, condevaux2022fair}. We now compare these values with the actual SHAP values (computed by materializing all coalitions) using various datasets. This comparison is based on the following considerations:
            \begin{itemize}[leftmargin=*]
                \item We include all the features in the explanation, meaning that the explanation size is $M$, i.e., the number of features in the target dataset. 
                \item We use the Kendall rank correlation coefficient to compare the rankings of the features within the explanation. This coefficient ranges from -1 to 1. Values close to $1$ indicate that the Layer 1's attribution ranks are in agreement with the actual SHAP values.
                \item To compare the disparity between the actual SHAP values and the Layer 1 attribution scores, we use the coefficient of determination, $R^2$. While mainly used to evaluate wellness of fit for machine learning models, we remark that it is suitable in our context. If we consider the SHAP values as the ``real'' values and the Layer 1 scores as the ``predicted'' values designed to fit the real values, the $R^2$ measures the proportion of the variance of the real SHAP values ``explained'' by the Layer 1 attribution scores. When the $R^2$ coefficient is 1 then both sets of scores agree, whereas values closer to 0 suggest that our approximation is as good as the naive attribution that assigns the same value to all features. Values smaller than 0 denote a performance below the naive baseline.
            \end{itemize}

            The experiments of this section are conducted on a subset of the datasets discussed in the \textit{Experiments} section, where computing the exact SHAP values is feasible. For black-box models, we employ the same models as described in the \textit{Experiments} section. The experiments are conducted on the whole test set (except for Credit card dataset, in which only $10$ random instances on the test set are used, due to high execution time required to compute the actual SHAP values) and we report averages and medians. 
            Table \ref{tab:Layer1_vs_Shapley} reports the corresponding results. We observe from Kendall's coefficient that the features ranks are almost the same between Layer 1's attribution scores and the exact SHAP values. The magnitudes of those scores also remain very similar as shown by the $R^2$ scores in a majority of cases. This makes Layer 1 attribution scores a very appealing choice for very large datasets: they are stable and very fast to compute, while still agreeing with the exact SHAP values.

           \begin{table}[t]
            \centering
           \fontsize{9}{10}\selectfont
           \addtolength{\tabcolsep}{-1pt}
                \begin{tabular}{@{}llccccrr@{}}
            \toprule
                                      &       & \multicolumn{3}{c}{Kendall $\tau$} &   \multicolumn{3}{c}{$R^2$-Score} \\  \cmidrule(lr){3-5}\cmidrule(lr){6-8}
                                      &        & SVM       & RF        & MLP        &   SVM       & RF       & MLP      \\ \midrule
            \multirow{2}{*}{Boston}   & $\mu$   & $0.95$    & $0.91$    & $0.959$    &   $0.98$    & $0.99$   & $0.99 $  \\
                                      & Med & $0.97$    & $0.92$    & $0.97$     &   $0.99$    & $0.99$   & $0.99$   \\ \midrule
            \multirow{2}{*}{Adult}    & $\mu$   & $0.7 $    & $0.68$    & $0.87$     &   $1.0 $    & $0.65$   & $0.41 $  \\
                                      & Med & $0.6$     & $ 0.7$    & $0.9$      &   $1.0$     & $0.77$   & $0.69$   \\ \midrule
            \multirow{2}{*}{Dry Bean} & $\mu$   & $0.86$    & $0.75$    & $0.84$     &   $-0.07$   & $0.74$   & $0.75$   \\
                                      & Med & $0.9$     & $0.76$    & $0.88$     &   $0.51$    & $0.79$   & $0.8$    \\ \midrule
            \multirow{2}{*}{Movie}    & $\mu$   & $1.0$     & $0.89$    & $0.85$     &   $1.0$     & $0.99$   & $0.95$   \\
                                      & Med & $1.0$     & $0.96$    & $0.86$     &   $1.0$     & $0.99$   & $0.95$ \\  \midrule
            \multirow{2}{*}{Credit Card}   & $\mu$   & $0.84$    & $0.64$    & $0.79$       & $0.94$    & $0.02$   & $0.81$   \\
                          & Med & $0.89$    & $0.71$    & $0.81$     &   $0.98$    & $0.21$   & $0.83$   \\
                         \bottomrule             
            \end{tabular}
            \caption{Kendall's $\tau$ and $R^2$-Score between the SHAP values and the attribution scores of ST-SHAP from the coalitions of Layer 1. $\mu$: Mean, Med: Median.}
	\label{tab:Layer1_vs_Shapley}
            \end{table}

        \subsection{Execution Times}
        The runtime of ST-SHAP Layer 1 is significantly smaller, compared to the computation of exact SHAP values, with complexities of $O(M)$ and $O(2^M)$ respectively. 

        Considering that the standard Kernel SHAP is recommended with a budget of $2000$ ($2\cdot M + 2^{11}$), utilizing ST-SHAP Layer 1 shows to be up to one order of magnitude faster than Kernel SHAP\footnote{It is also up to three orders of magnitude faster than computing the exact SHAP values.}. This is so because the budget is significantly lower than $2000$ as outlined in \cite[Appendix~2]{kelodjou2024shapingupshap}.        
	\section{Conclusion} \label{sec:conclusion}

 In this paper, we have investigated the instability issues of the Kernel SHAP estimator and proposed a novel neighbor selection approach that achieves full stability.
 Our experimental results demonstrate that our approach does not compromise fidelity. We have also conducted a theoretical analysis of the coefficients obtained by applying ST-SHAP on the neighbors of Layer 1. We define those scores formally and show experimentally that they remain very close to the actual SHAP values. This makes this approach an attribution method in itself, that incurs faster computation while retaining most of the desirable properties of attribution scores.
 
 In the future we are interested in understanding the properties of black-box models that guarantee good approximations of the SHAP values when trained on subsets of the coalitions space. This could include the definition of other attribution methods, and the relationship between the budget and the complexity of the target black-box we aim to explain.

\bibliography{reference}

\clearpage
\onecolumn
\appendix

\section{Appendix 1}
In this section, we prove Theorem 1 and show that the attribution values calculated using Layer 1 coalitions satisfy the properties of linearity, symmetry, and efficiency.

\subsection{Proof of Theorem 1}
In this Section, we prove Theorem 1, which we express again hereafter:    

\begin{theorem*}[Attribution values with Layer 1]
For any feature $j$, the attribution value $\phi_j$ computed by ST-SHAP when filling exactly the Layer 1 is:
\begin{align*}
   \phi_j &=  \tilde\phi_j
        + \frac{1}{M}\left(f(N) - f(\emptyset)
        - \sum_{i=1}^M \tilde\phi_i
        \right),
\end{align*}
        where 
        for any $i$, $\tilde\phi_i=
        \frac{f(\{i\})- f(\emptyset) +
        f(N) - f(N\backslash\{i\})
        }{2}$, $f$ the black-box model, $N = \{1, 2, \cdots, M\}$ the set of features, and $M$ the total number of features. 
\end{theorem*}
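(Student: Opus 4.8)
The plan is to set up the restricted Kernel SHAP regression over exactly the coalitions of Layer 1, and solve it explicitly via Lagrange multipliers. First I would observe that all coalitions in Layer 1 share the same Shapley-kernel weight, so the weighted least-squares objective of Equation~\eqref{eq:wlr} collapses to the unweighted sum of squared residuals in Equation~\eqref{eq:new_wlr}. The Layer-1 coalitions split into two families: the $M$ singletons $\{j\}$, for which $g(\mathds{1}_{\{j\}}) = \phi_0 + \phi_j$ must approximate $f(\{j\})$, and the $M$ co-singletons $N\setminus\{j\}$, for which $g(\mathds{1}_{N\setminus\{j\}}) = \phi_0 + \sum_{i\neq j}\phi_i$ must approximate $f(N\setminus\{j\})$. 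The infinite kernel weight at $|z'|\in\{0,M\}$ enforces the two hard constraints $\phi_0 = f(\emptyset)$ (missingness) and $\sum_{i=1}^M \phi_i = f(N) - f(\emptyset)$ (local accuracy), so $\phi_0$ is pinned and the free variables are $\phi_1,\dots,\phi_M$ subject to a single linear equality.

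Next I would substitute $\phi_0 = f(\emptyset)$ and rewrite each co-singleton residual $\phi_0 + \sum_{i\neq j}\phi_i - f(N\setminus\{j\})$ using the local-accuracy constraint: $\sum_{i\neq j}\phi_i = \bigl(f(N)-f(\emptyset)\bigr) - \phi_j$, so that residual becomes $f(N) - f(N\setminus\{j\}) - \phi_j$, i.e. $\Delta_{N\bar j} - \phi_j$ in the paper's notation. Hence the objective $\mathcal{L}(\bm\phi)$ decouples into a sum over $j$ of $(\phi_j - \Delta_{j\emptyset})^2 + (\Delta_{N\bar j} - \phi_j)^2$, which expands to the stated quadratic $2\phi_j^2 - 2(\Delta_{N\bar j}+\Delta_{j\emptyset})\phi_j + \text{const}$. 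I would then form the Lagrangian $\mathcal{L}'(\bm\phi,\lambda) = \mathcal{L}(\bm\phi) + \lambda\bigl(\sum_j \phi_j - \Delta\bigr)$, set $\partial\mathcal{L}'/\partial\phi_j = 4\phi_j - 2\Delta_{N\bar j} - 2\Delta_{j\emptyset} + \lambda = 0$ and $\partial\mathcal{L}'/\partial\lambda = \Delta - \sum_j\phi_j = 0$. Summing the $M$ stationarity equations and using the constraint $\sum_j\phi_j = \Delta$ gives a closed form for $\lambda$; back-substituting yields $\phi_j = \tfrac12(\Delta_{N\bar j}+\Delta_{j\emptyset}) - \tfrac{\lambda}{4}$, and recognizing $\tfrac12(\Delta_{N\bar j}+\Delta_{j\emptyset}) = \tilde\phi_j$ together with simplifying $-\lambda/4$ to $\tfrac1M\bigl(f(N)-f(\emptyset) - \sum_i \tilde\phi_i\bigr)$ produces exactly Equation~\eqref{eq:valuePhi}.

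The main obstacle, and the step deserving the most care, is the bookkeeping that turns the co-singleton residuals into expressions in $\phi_j$ alone: one must correctly invoke the local-accuracy constraint to eliminate $\sum_{i\neq j}\phi_i$ before differentiating, since otherwise the objective does not decouple across $j$ and the system of normal equations is genuinely coupled through a rank-one term. A secondary subtlety is justifying that the constraints $\phi_0 = f(\emptyset)$ and $\sum_{i=1}^M\phi_i = f(N)-f(\emptyset)$ are indeed the correct encoding of the limit $\pi_x(z')\to\infty$ at $|z'|\in\{0,M\}$ — this is inherited from the standard Kernel SHAP formulation recalled in the Background section, so I would simply cite it rather than re-derive it. Once the objective is in decoupled quadratic form, the Lagrange computation is routine and the only thing left is the algebraic simplification of $\lambda$, which follows from averaging the $M$ first-order conditions.
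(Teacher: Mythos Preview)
Your proposal is correct and follows essentially the same route as the paper's proof: reduce to the unweighted least-squares problem over Layer~1, impose $\phi_0=f(\emptyset)$ and the local-accuracy constraint, use that constraint to collapse each co-singleton residual to $\Delta_{N\bar j}-\phi_j$ so that $\mathcal{L}$ decouples across $j$, then solve the resulting separable quadratic under the single linear constraint via a Lagrange multiplier. The stationarity condition, the summation trick to isolate $\lambda$, and the final identification $\tfrac12(\Delta_{N\bar j}+\Delta_{j\emptyset})=\tilde\phi_j$ all match the paper exactly, and your flagged subtlety---that the constraint must be used to eliminate $\sum_{i\neq j}\phi_i$ before the objective decouples---is precisely the step the paper carries out between its equations (24) and (26).
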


\begin{proof}
Consider $g$ as the linear model of binary features trained with coalitions, where the learned coefficients represent the attribution values of the corresponding features. Determining the attribution values $\phi_j$ when restricting to coalitions of layer $1$ is equivalent to solving the following optimization problem\footnote{The weighting kernel $\pi_x$ is omitted because all coalitions in layer $1$ have equal weight.}: 
        \begin{equation}
        \textit{arg}\min_{\phi} \; \;  \sum_{\mathclap{z'\in \{0,1\}^M}} \; \; \; {\Big(g(z') - f_x(z')\Big)^2}.
        \label{eq:new_wlr2}
        \end{equation}
        
        {\vspace{0.3cm} Coalitions $z'$ of layer $1$ are characterized by $|z'| = 1$ or $|z'| = M-1$.}
        
        \begin{itemize}
            \item When $|z'|=1$ with $j \in N$ being the only present feature, we obtain the equation:  \[g(z') = \phi_0 + \phi_j.\] 
            We aim to determine $g$ such that $g(z')$ provides a close approximation to $f(\{j\})$, i.e., $g(z') \approx f_x(z')$ where $f_x(z')=f(\{j\})$.
            \item When $|z'| = M-1$, with $j \in N$ being the only absent feature, we get: \[ g(z') = \phi_0 +  \sum_{\substack{i=1\\i\neq j\\ \phantom{-}}}^M \phi_{i}.\]
            We aim to determine $g$ such that $g(z')$ provides a close approximation to $f(N\setminus\{j\})$, i.e., $g(z') \approx f_x(z')$ where $f_x(z')=f(N\setminus\{j\})$.
        \end{itemize}
        
        \noindent Given the following terms:
        \begin{itemize}
            \item $f(\{j\}) - f(\emptyset)$ denoted by $\Delta_{j\emptyset}$
            \item $f(N\setminus \{j\}) - f(\emptyset)$ denoted by  $\Delta_{\bar{j}\emptyset}$
            \item $f(N) - f(N\setminus \{j\})$ denoted by $\Delta_{N\bar{j}}$
            \item $f(N) - f(\emptyset)$ denoted by $\Delta$,
        \end{itemize}
        \noindent this is tantamount to solving a minimization problem on the $\phi_j$ scores with two constraints: 
        \begin{itemize}
            \item $\phi_0 = f(\emptyset)$
            \item $\sum_{i=1}^M\phi_i = \Delta$.
        \end{itemize}
        The minimization problem is defined by:
    \begin{equation}
    \textit{arg}\min_{\phi} \mathcal{L}(\bm{\phi}),     
    \end{equation}
where $\mathcal{L}(\bm{\phi})$ is defined by:
    \begin{align}
        &\mathcal{L}(\bm{\phi}) = \sum_{j=1}^M{\Big( (\phi_0 + \phi_j) - f(\{j\}) \Big)^2} + \sum_{j=1}^M{\left( \left( \phi_0 + \sum_{\substack{i=1\\i\neq j\\ \phantom{-}}}^M{\phi_i}\right) - f(N\setminus \{j\}) \right)^2}
        \\
        &= \sum_{j=1}^M{\left(\phi_j- \Delta_{j\emptyset}\right)^2} + \sum_{j=1}^M{\left(\left(\sum_{\substack{i=1\\i\neq j\\ \phantom{-}}}^M \phi_{i}\right)- \Delta_{\bar{j}\emptyset}\right)^2} \;\;\;[\text{Due to the requirement} \;\; \phi_0 = f(\emptyset)]
        \\ 
         &= \sum_{j=1}^M (\phi_j- \Delta_{j\emptyset})^2 + \sum_{j=1}^M (\Delta - \phi_j- \Delta_{\bar{j}\emptyset} )^2 \;\;\;[\text{Due to the requirement} \;\; \sum_{i=1}^M{\phi_i} = \Delta]
        \\
        &=
        \sum_{j=1}^M (\phi_j- \Delta_{j\emptyset})^2 + \sum_{j=1}^M (\Delta_{N\bar{j}} - \phi_j)^2 \;\;\;[\text{Because}\;\; \Delta - \Delta_{\bar{j}\emptyset} = \Delta_{N\bar{j}}] 
        \\
        &= \sum_{j=1}^M \left(2\phi_j^2 - 2(\Delta_{N\bar{j}} + \Delta_{j\emptyset})\phi_j + \Delta_{N\bar{j}}^2 + \Delta_{j\emptyset}^2  \right). 
\end{align}

\noindent The minimization of $\mathcal{L}$ is constrained by \(\sum_{j=1}^{M}{\phi_j} - \Delta = 0\). To solve this, we can use the method of the Lagrange multiplier i.e., we minimize instead:

\begin{equation}
    \textit{arg}\min_{\bm{\phi}, \lambda} \mathcal{L'}(\bm{\phi}, \lambda) = \mathcal{L}(\bm{\phi}) + \lambda\Big(\sum_{j=1}^{M}{\phi_j} - \Delta\Big).
\end{equation} 

This is solved when 

\[
   \frac{\partial \mathcal{L}'}{\partial \bm{\phi}} = 0 \;\;\; \text{and} \;\;\; \frac{ \mathcal{L}'}{\partial \lambda} = 0.
\]

If we rearrange the terms we can see that $\mathcal{L'}(\bm{\phi}, \lambda)$ has the following form:

\begin{equation}
    \mathcal{L'}(\bm{\phi}, \lambda) =  \sum_{j=1}^M \Big( 2\phi_j^2 - (2\Delta_{N\bar{j}} + 2\Delta_{j\emptyset} - \lambda)\phi_j + \Delta_{N\bar{j}}^2 + \Delta_{j\emptyset}^2  \Big) - \lambda\Delta.
\end{equation}
Hence,

\begin{align}
        &\frac{\partial \mathcal{L'}}{\partial \phi_j} = 0 = 4\phi_j -2\Delta_{N\bar{j}} -2\Delta_{j\emptyset} + \lambda\;\; \therefore \;\; 4\phi_j = 2\Delta_{N\bar{j}} + 2\Delta_{j\emptyset} - \lambda \label{eq:phi2}
        \\          
        & \frac{\partial \mathcal{L'}}{\partial \lambda} = 0 = \Delta - \sum_{j=1}^M{\phi_j}. 
        \\ 
\end{align}
Adding up the $M$ variants of Equation~\eqref{eq:phi2}, we can solve for $\lambda$:

\begin{align}
        &4\sum_{i=1}^M{\phi_i} = \sum_{i=1}^M{(2\Delta_{N\bar{i}} + 2\Delta_{i\emptyset})} - M\lambda 
        \\          
        & \lambda = \frac{\sum_{i=1}^M{(2\Delta_{N\bar{i}} + 2\Delta_{i\emptyset} ) - 4\sum_{i=1}^M{\phi_i}}}{M}. \label{eq:lambda2}
\end{align}
We can now plug Equation~\eqref{eq:lambda2} into Equation~\eqref{eq:phi2}:
\begin{align}
        &4\phi_j = 2\Delta_{N\bar{j}} + 2\Delta_{j\emptyset} - \left( \frac{\sum_{i=1}^M{(2\Delta_{N\bar{i}} + 2\Delta_{i\emptyset} ) - 4\sum_{i=1}^M{\phi_i}}}{M} \right)
        \\
         &4\phi_j = 2\Delta_{N\bar{j}} + 2\Delta_{j\emptyset} + \frac{1}{M} \left( 4\Delta - \sum_{i=1}^M{ 2\Delta_{N\bar{i}} + 2\Delta_{i\emptyset}}\right)
        \\
        &\phi_j = \frac{\Delta_{N\bar{j}}}{2}+ \frac{\Delta_{j\emptyset}}{2} + \frac{1}{M} \left( \Delta - \sum_{i=1}^M{ \frac{ \Delta_{N\bar{i}}+ \Delta_{i\emptyset}}{2}}\right). 
        \\
\end{align}

meaning for any $j\in N$
\begin{align}
   \phi_j &=  \tilde\phi_j
        + \frac{1}{M}\left(f(N) - f(\emptyset)
        - \sum_{i=1}^M \tilde\phi_i
        \right).
\end{align}

\end{proof}

\subsection{\textsc{LES} properties of Layer 1 Explanations}
Our method for approximating Shapley values --by calculating attribution values using only Layer $1$ coalitions-- belongs to the \textsc{LES} family, thereby satisfying the properties of Linearity, Symmetry, and Efficiency. We demonstrate here that these three fundamental properties are verified.

 \subsubsection{Linearity.} The attribution method based on Layer $1$ coalitions is linear if for any models $f_1, f_2$ with an instance $x$, and for any $\alpha_1, \alpha_2 \in \mathbb{R}$: 
 \[
 \phi_j(x, \alpha_1 f_1 + \alpha_2 f_2) = \alpha_1 \phi_j(x, f_1) + \alpha_2 \phi_j(x, f_2).
 \]
 We can proceed in two steps: (i) prove that $\phi_j(x, \alpha f) = \alpha \phi_j(x, f)$ and then (ii) $\phi_j(x, f_1 + f_2) = \phi_j(x, f_1) + \phi_j(x, f_2)$.

\begin{equation*}
        \phi_j =  \tilde\phi_j
                 + \frac{1}{M}\left(f(N) - f(\emptyset)
                 - \sum_{i=1}^M \tilde\phi_i
                    \right)
\end{equation*}

where for any $i \in N$ (the set of features), 
\begin{equation*}
    \tilde\phi_i=
                \frac{f(\{i\})- f(\emptyset) + f(N) - f(N\backslash\{i\})}{2}.
\end{equation*}

    \begin{itemize}

        \item First, we want to prove that if $g=\alpha f$ then $\forall j \in N$, $\phi_j(x,g)=\alpha \phi_j(x,f)$.
        
    Assume \(g=\alpha f\) and take \(j \in N\) and \(i \in N\), then:
        \begin{align*}
            \tilde\phi_i(x,g)&=
                \frac{g(\{i\})- g(\emptyset) + g(N) - g(N\backslash\{i\})}{2}\\
            &= \frac{\alpha f(\{i\})- \alpha f(\emptyset) + \alpha f(N) - \alpha f(N\backslash\{i\})}{2} \\
            &= \alpha \phi_i(x,f).
        \end{align*}

        \begin{align*}
        \phi_j(x,g) &=  \tilde \phi_j(x,g)+ \frac{1}{M}\left(g(N) - g(\emptyset)- \sum_{i=1}^M \tilde\phi_i(x,g) \right)\\
                    &=  \alpha \tilde\phi_j(x,f)+ \frac{1}{M}\left(\alpha f(N) - \alpha f(\emptyset)- \sum_{i=1}^M \alpha \tilde\phi_i(x,f)\right)\\
                    &=\alpha \left(\tilde\phi_j(x,f)+ \frac{1}{M}\left(f(N) -  f(\emptyset)- \sum_{i=1}^M  \tilde\phi_i(x,f)\right)\right)\\
                    &=\alpha \phi_j(x,f).
        \end{align*}

        \item Second, we want to prove that if \(g=f_1 + f_2\) then \(\forall j \in N\), \( \phi_j(g,x)=\phi_j(x,f_1)+\phi_j(x,f_2) \). \\
        Assume $g=f_1+f_2$ and take $j \in N$ and $i \in N$, then:

        \begin{align*}
            \tilde\phi_i(x,g)&=
                \frac{g(\{i\})- g(\emptyset) + g(N) - g(N\backslash\{i\})}{2}\\
            &= \frac{f_1(\{i\}) + f_2(\{i\})- (f_1(\emptyset)+f_2(\emptyset)) + (f_1(N)+f_2(N)) - (f_1(N\backslash\{i\})+f_2(N\backslash\{i\}))}{2} \\
            &= \frac{f_1(\{i\}) - f_1(\emptyset) + f_1(N) - f_1(N\backslash\{i\})}{2} + \frac{f_2(\{i\}) - f_2(\emptyset) + f_2(N) - f_2(N\backslash\{i\}))}{2} \\
            &= \tilde\phi_i(x,f_1)+\tilde\phi_i(x,f_2).
        \end{align*}

               \begin{align*}
        \phi_j(x,g) &=  \tilde \phi_j(x,g)+ \frac{1}{M}\left(g(N) - g(\emptyset)- \sum_{i=1}^M \tilde\phi_i(x,g) \right)\\
                    &=  \tilde\phi_j(x,f_1)+\tilde\phi_j(x,f_2)+ \frac{1}{M}\left(f_1(N)+f_2(N) - (f_1(\emptyset)+f_2(\emptyset))- \sum_{i=1}^M (\tilde\phi_i(x,f_1)+\tilde\phi_i(x,f_1))\right)\\
                    &=\tilde\phi_j(x,f_1)+ \frac{1}{M}\left(f_1(N) -  f_1(\emptyset)- \sum_{i=1}^M  \tilde\phi_i(x,f_1)\right)
                        + \tilde\phi_j(x,f_2)+ \frac{1}{M}\left(f_2(N) -  f_2(\emptyset)- \sum_{i=1}^M  \tilde\phi_i(x,f_2)\right)\\
                    &=\phi_j(x,f_1) + \phi_j(x,f_2).
        \end{align*}

    \end{itemize}
\vspace{.2cm}
    \subsubsection{Efficiency.} We want to prove that  $\sum_{i=1}^M \phi_i = f(N) - f(\emptyset)$
  \begin{align*}
        \phi_j &=  \tilde\phi_j+ \frac{1}{M}\left(f(N) - f(\emptyset)- \sum_{i=1}^M \tilde\phi_i\right)\\
        \sum_{j=1}^M \phi_j &=  \sum_{j=1}^M \left(\tilde\phi_j+ \frac{1}{M}\left(f(N) - f(\emptyset)- \sum_{i=1}^M \tilde\phi_i\right)\right)\\
              &=\sum_{j=1}^M \tilde\phi_j +\left(f(N) - f(\emptyset)- \sum_{i=1}^M \tilde\phi_i\right)\\
              &=f(N) - f(\emptyset).
\end{align*}  

    \subsubsection{Symmetry.} We want to show that

\begin{equation}
    \left(
    \left.
    \begin{aligned}
        & \forall j, k \in N \\
        & \forall S \in \mathcal{P}(N), \; \textrm{with} \; j,k \notin S
    \end{aligned}
    \right\} \quad
    f(S\cup\{j\}) = f(S\cup\{k\}) \right) 
    \Rightarrow \phi_j = \phi_k
\end{equation}

\vspace{.2cm}
    \noindent Assume: 
    \begin{equation}
        \left.
        \begin{aligned}
        &\forall j,k \in N\\
        &\forall S  \in \mathcal{P}(N), \; \textrm{with} \; j,k \notin S
        \end{aligned} 
        \right\} \quad
        f(S\cup\{j\})=f(S\cup\{k\})
    \end{equation}
    Consider specific cases of $S$:
    \begin{itemize}
        \item For $S=\emptyset$, we have   \(f(\{j\})=f(\{k\}).\)
        \item For $S=N\backslash\{j,k\}$, we have
        \(f(N\backslash\{k\})=f(N\backslash\{j\}).\)
    \end{itemize}
    \vspace{.2cm}
  Thus, we have \(f(\{j\}) = f(\{k\})\) and \(f(N\backslash\{j\}) = f(N\backslash\{k\}).\)
  
\vspace{.2cm}
\noindent The Layer $1$ attribution values can be rewritten as follows
        \begin{equation}
            \phi_j=\frac{f(\{j\})-f(N\backslash\{j\})}{2} - \frac{\sum_{i=1}^M f(\{i\})-f(N\backslash\{i\})}{2M} +  \frac{f(N) - f(\emptyset)}{M}
        \label{eq:phiLayer1}
        \end{equation}

   \noindent Since  $f(\{j\}) = f(\{k\})$ and $f(N\backslash\{j\}) = f(N\backslash\{k\})$, it follows that:
    
    \begin{equation*}
        f(\{j\})-f(N\backslash\{j\})=f(\{k\})-f(N\backslash\{k\}).
        \end{equation*}

    \vspace{.3cm}
    \noindent The other terms in the expression for $\phi_j$ do not depend on $j$ or $k$, so we conclude that $\phi_j=\phi_k$.
\clearpage
\section{Appendix 2}

\subsection{Additional experiments on comparing SHAP and ST-SHAP}
In this section, we present further results on the stability of Kernel SHAP (called \textsc{SHAP} here) and our stability-enhancing approach, \textsc{ST-SHAP}.

\begin{center}
    \Image[width=0.45\linewidth]{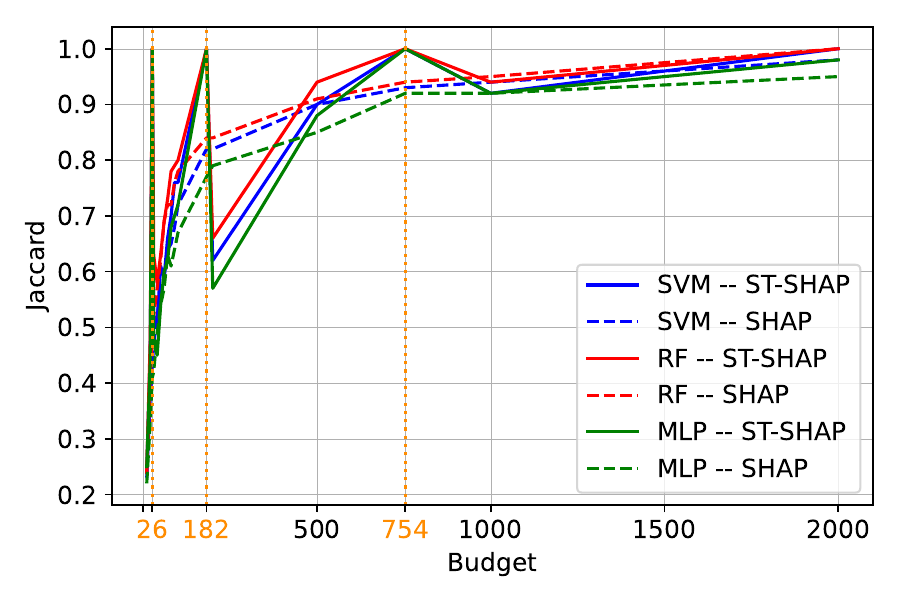}{(a) Jaccard} \,
    \Image[width=0.45\linewidth]{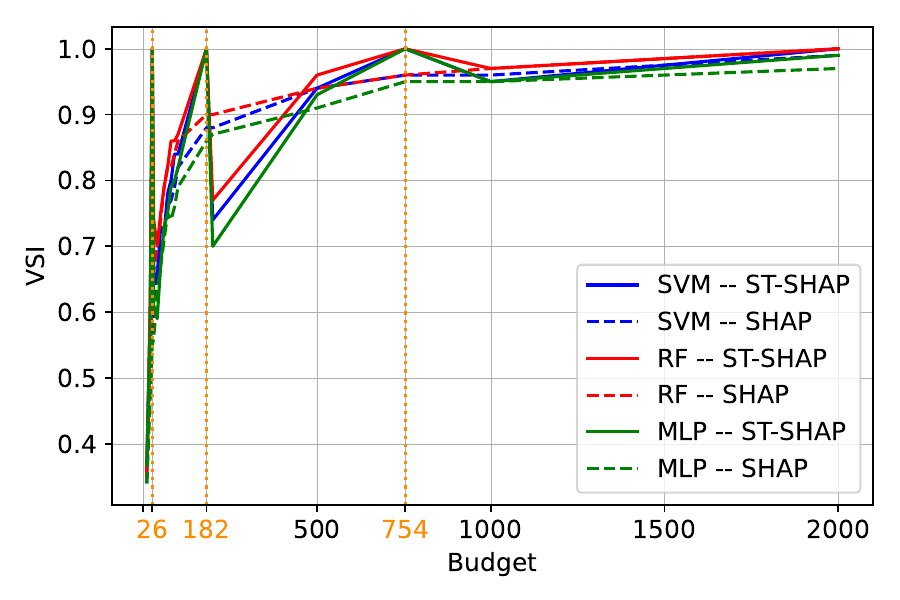}{(b) VSI} \,
    
    \Image[width=.406\linewidth]{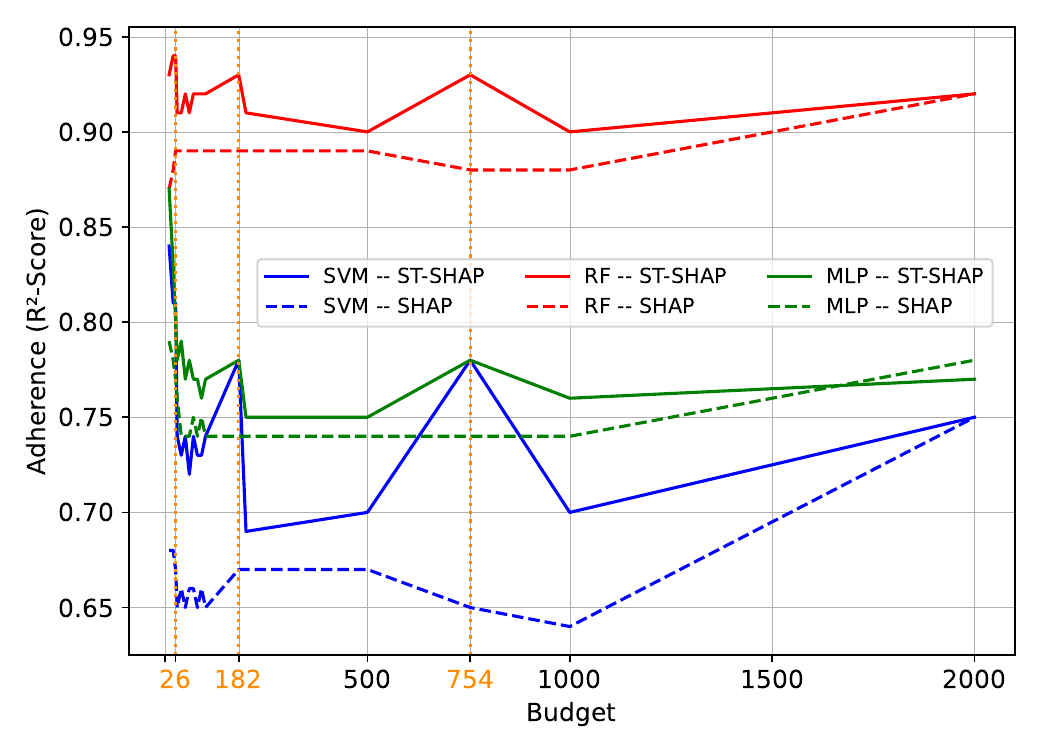}{(c) Adherence with the black-box} \,
    \Image[width=0.45\linewidth]{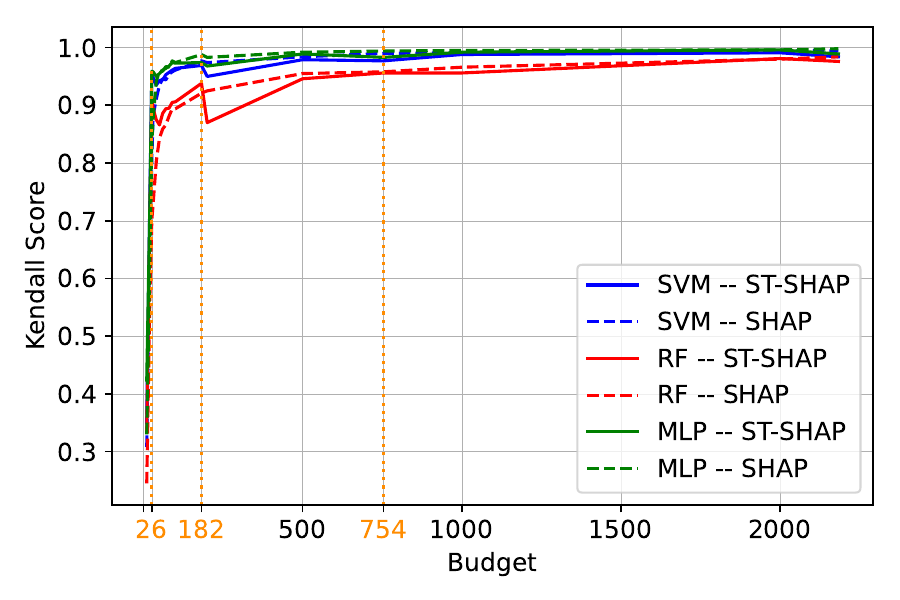}{(d) Kendall $\tau$ with the exact SHAP values} \,

    \Image[width=0.45\linewidth]{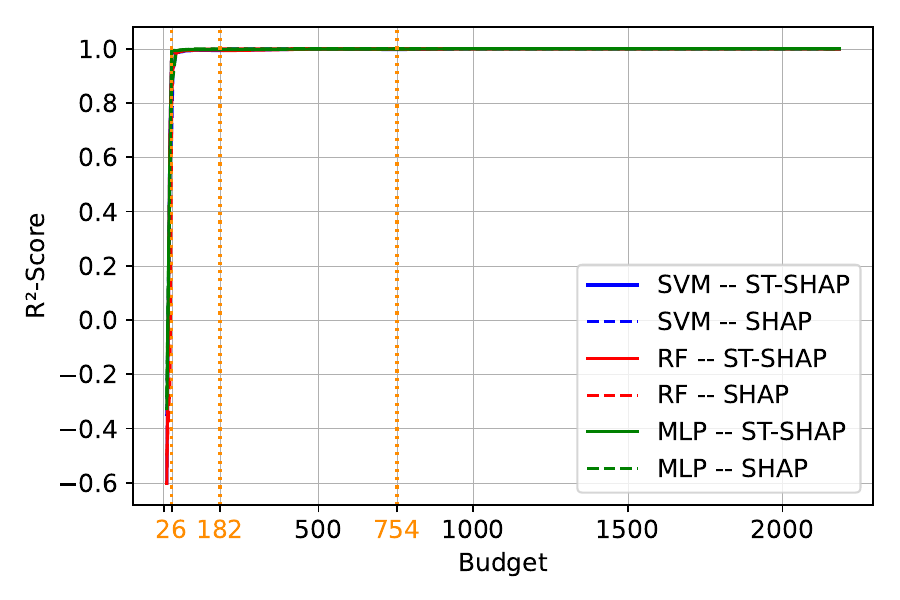}{(e) $R^2$-Score with the exact SHAP values} \,
    
    \captionof{figure}{Comparison results between SHAP and ST-SHAP across various criteria on the Boston dataset.}
    \vspace{.2cm}
    \label{fig:results_boston}
\end{center}

In Figure \ref{fig:results_boston}, we compare SHAP and ST-SHAP to evaluate their stability, fidelity, and the quality of their approximation of the exact SHAP values. Figures \ref{fig:results_boston}a and \ref{fig:results_boston}b depict stability outcomes assessed using two state-of-the-art metrics: the Jaccard index and the Variable Stability Index (VSI) \cite{visani2022statistical}, respectively. An initial observation highlights a significant similarity between these two plots, suggesting a certain equivalence between these metrics. 
This is the rationale behind our decision to utilize Jaccard's index to present the stability results.
Furthermore, it is noted that ST-SHAP achieves maximum stability (both Jaccard and VSI at $1.0$) across complete layers. Additionally, as the budget approaches completeness for a layer, stability also tends toward $1.0$ on ST-SHAP. Figure \ref{fig:results_boston}c, on the other hand, depicts the results concerning fidelity with respect to the black-box model. It is noticeable that ST-SHAP performs better than SHAP, with peaks occurring over complete layers. This leads us to conclude that our ST-SHAP methods do not impact the adherence of the explanations. Lastly, we compared the approximation quality of the exact SHAP values using the SHAP and ST-SHAP surrogates. Figure \ref{fig:results_boston}d illustrates the outcomes based on Kendall's coefficient, revealing a similar ranking between the two methods and a substantial agreement with the exact SHAP values. Regarding the disparities between the SHAP and ST-SHAP coefficients relative to the exact SHAP values, it is noticeable in Figure \ref{fig:results_boston}e that both methods align closely and provide effective approximations of the exact SHAP values.

\begin{center}
\vspace{.3cm}
    \Image[width=0.45\linewidth]{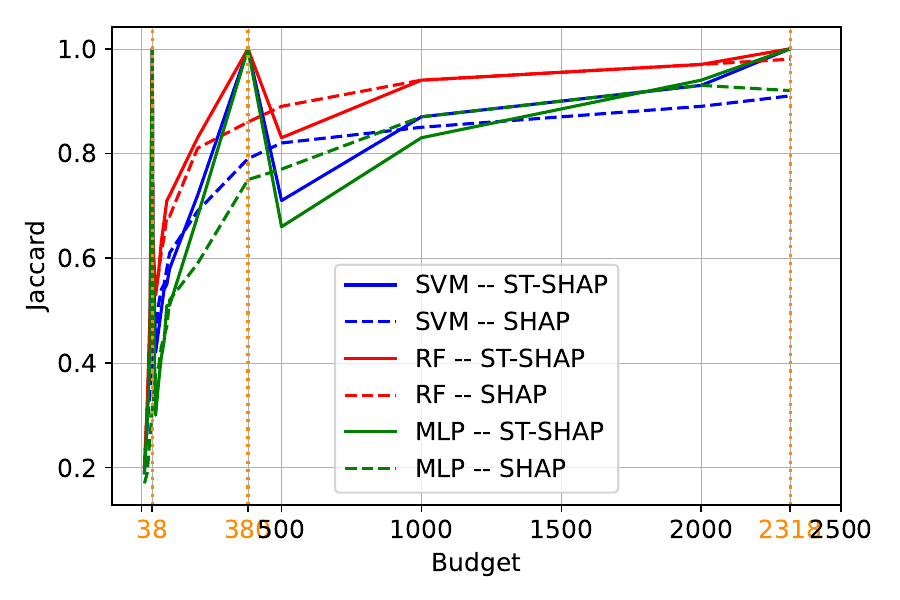}{(a) Jaccard} \,
    \Image[width=0.45\linewidth]{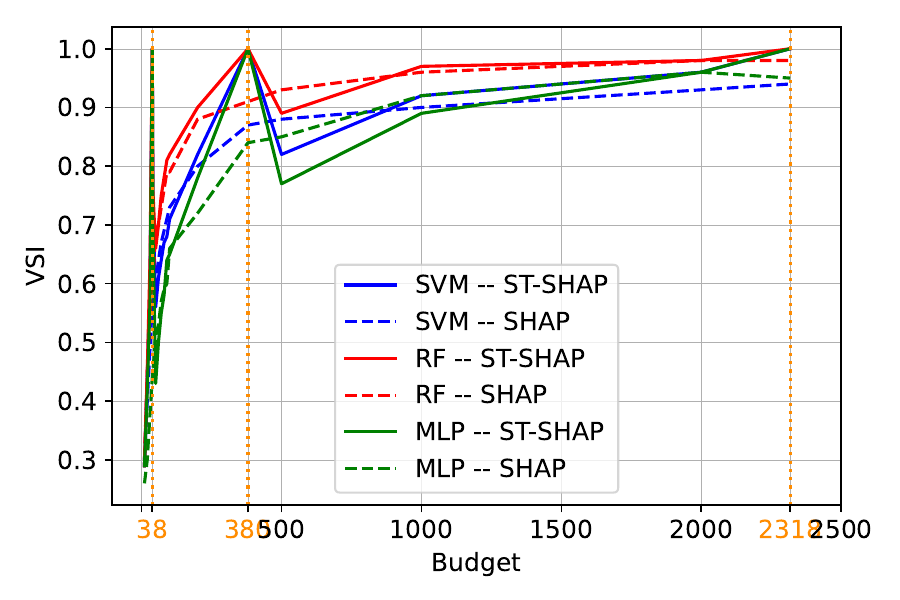}{(b) VSI} \,
    \vspace{.2cm}
    \Image[width=.406\linewidth]{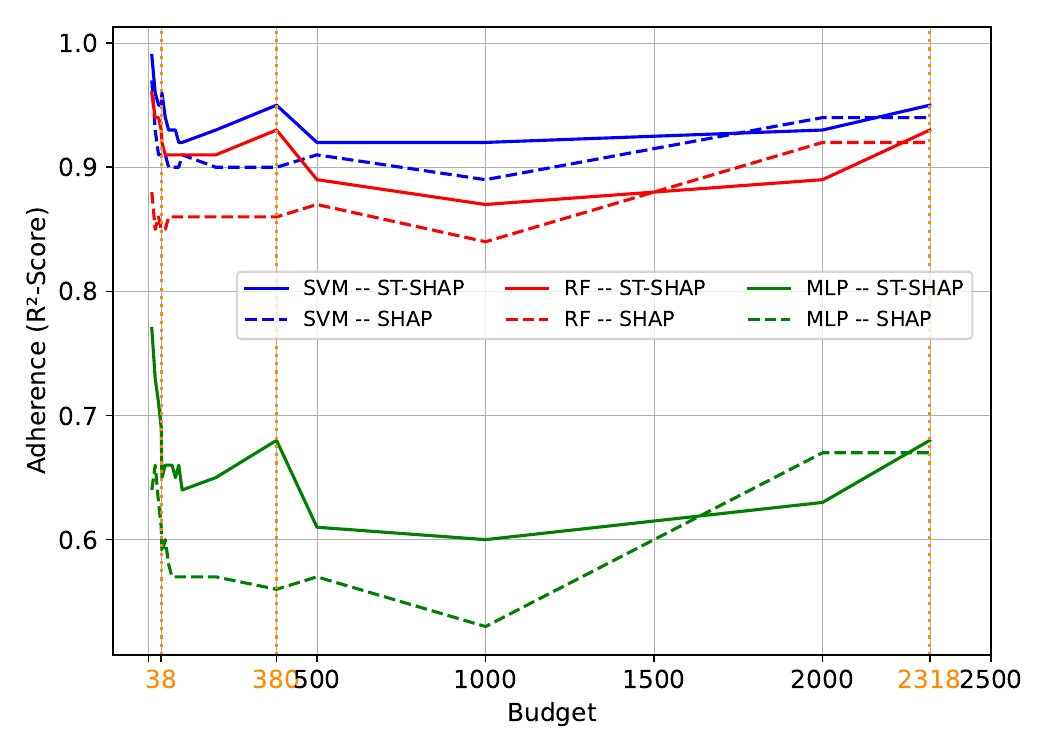}{(c) Adherence with the black-box} \,
    \Image[width=0.45\linewidth]{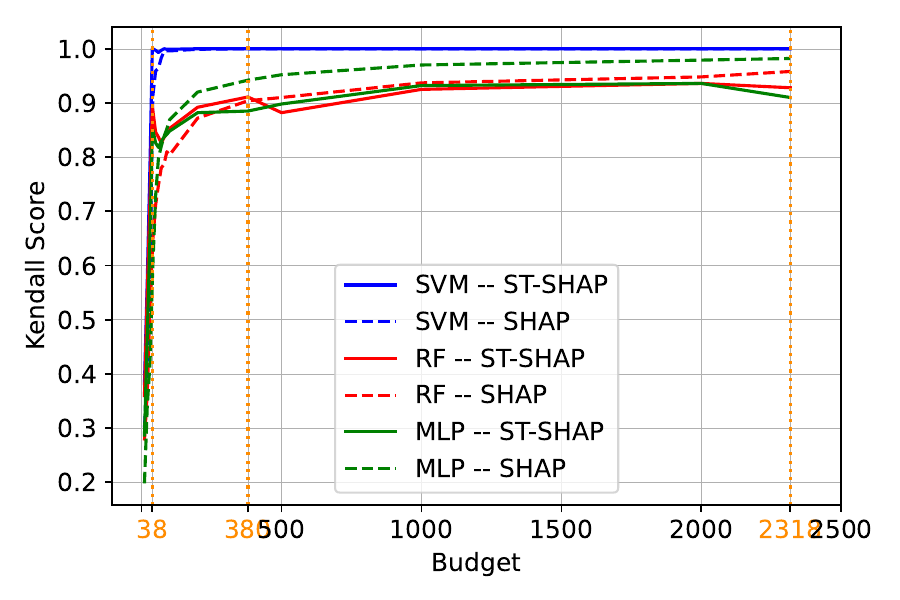}{(d) Kendall $\tau$ with the exact SHAP values} \,
    \vspace{.2cm}
    \Image[width=0.45\linewidth]{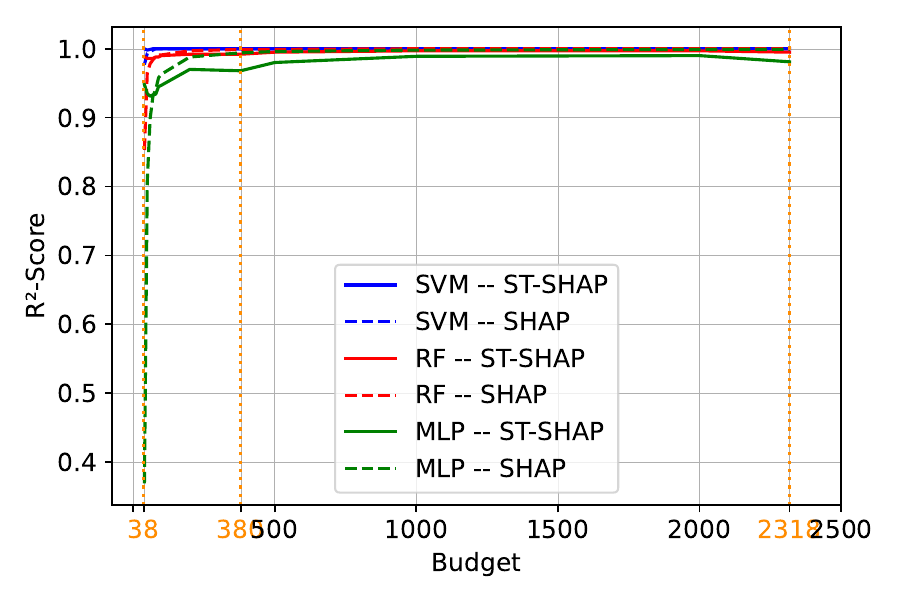}{(e) $R^2$-Score with the exact SHAP values} \,
   
    \captionof{figure}{Comparison results between SHAP and ST-SHAP across various criteria on the Movie dataset.}
    \vspace{.2cm}
    \label{fig:results_movie}
\end{center}

Figure \ref{fig:results_movie} illustrates SHAP and ST-SHAP results on the Movie dataset. Stability (Figures \ref{fig:results_movie}a and \ref{fig:results_movie}b) and fidelity (Figure \ref{fig:results_movie}c) criteria have the same observations as those on the Boston dataset. Regarding the precision in approximating the exact SHAP values, Figure \ref{fig:results_movie}d showcases a similar ranking, with SHAP slightly better than ST-SHAP. Regarding the attribution coefficients of both methods compared to the exact SHAP values, SHAP presents relatively weaker performance for smaller budgets (as depicted in Figure \ref{fig:results_movie}e).  However, overall, both curves exhibit similar performance levels.

\begin{center}
    \Image[width=0.45\linewidth]{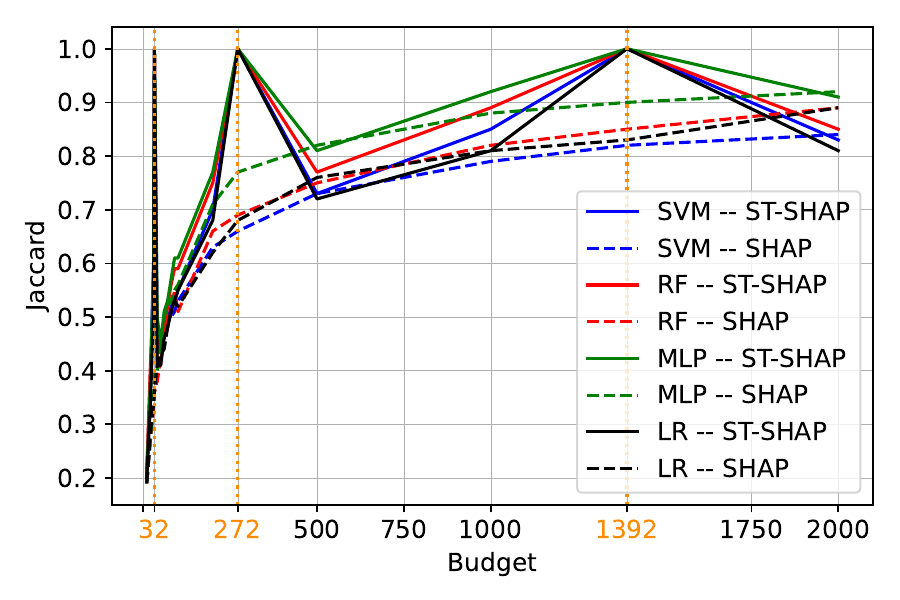}{(a) Jaccard} \,
    \Image[width=0.45\linewidth]{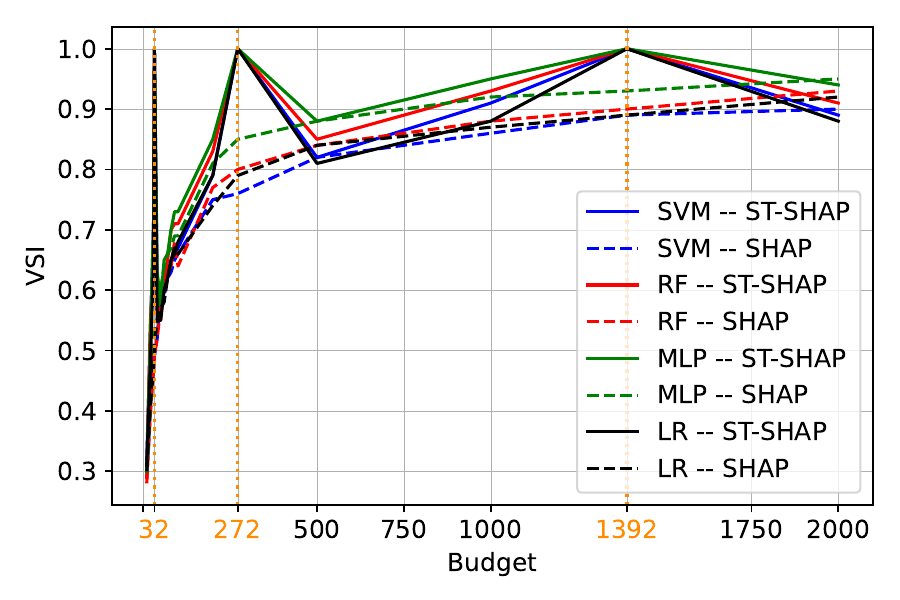}{(b) VSI} \,
    
    \vspace{.1cm}
    \Image[width=.406\linewidth]{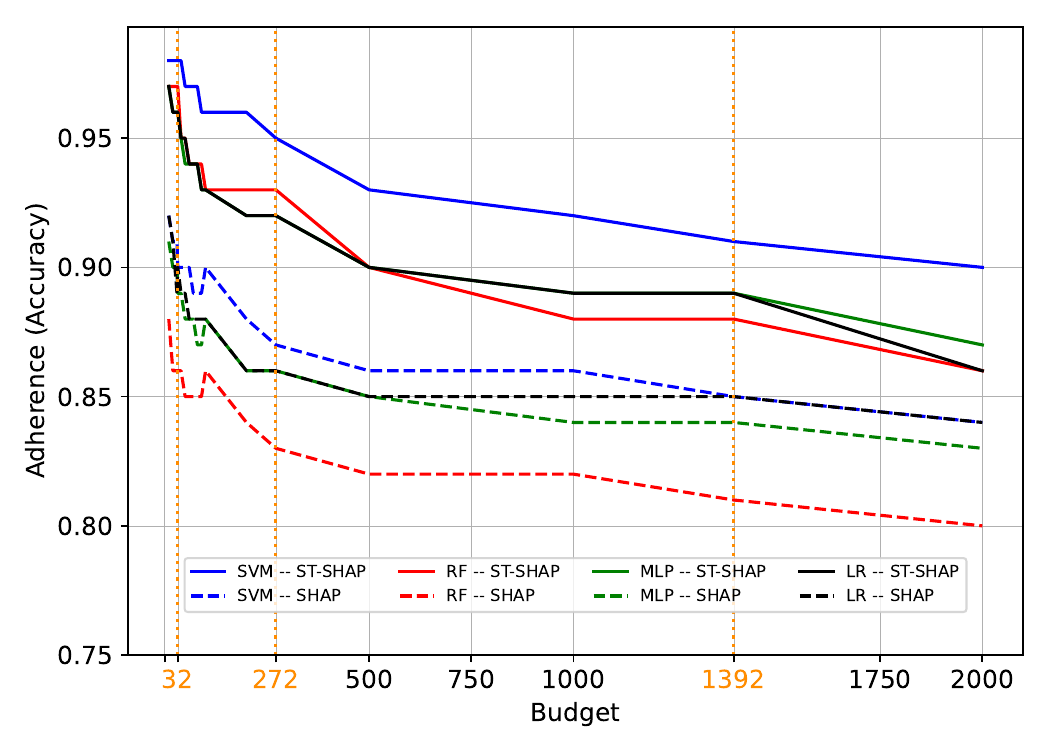}{(c) Adherence with the black-box} \,
    \Image[width=0.45\linewidth]{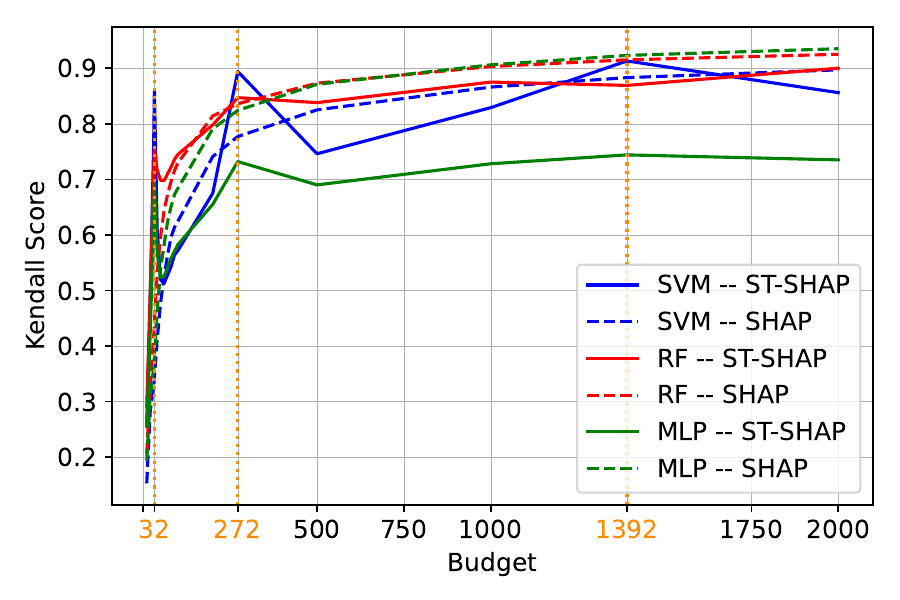}{(d) Kendall $\tau$ with the exact SHAP values} \,
    
    \vspace{.1cm}
    \Image[width=0.45\linewidth]{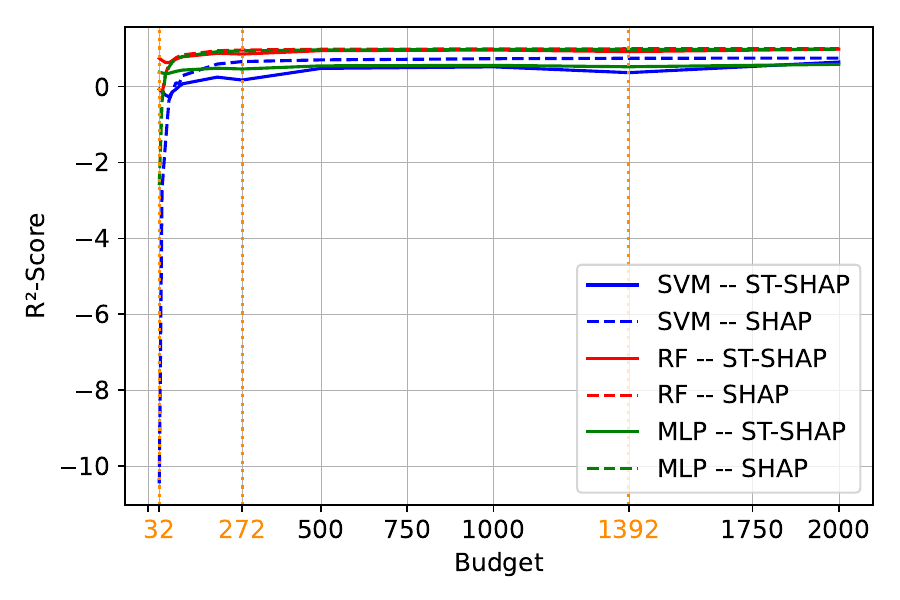}{(e) $R^2$-Score with the exact SHAP values} \,
    
    \captionof{figure}{Comparison results between SHAP and ST-SHAP across various criteria on the Dry Bean dataset.}
    \vspace{.2cm}
    \label{fig:results_drybean}
\end{center}

Figure \ref{fig:results_drybean} compares ST-SHAP and SHAP on the Dry Bean dataset. Figures \ref{fig:results_drybean}a, \ref{fig:results_drybean}b, and \ref{fig:results_drybean}c share the same interpretation as their counterparts in other datasets. As for the ranking quality (according to the exact SHAP values) using Kendall's coefficient in Figure  \ref{fig:results_drybean}d, a general similarity is noticed among all models, except for ST-SHAP with the MLP function, which exhibits a lower ranking score than SHAP with MLP. Although there are generally similar performances between the two methods on the $R^2$ score compared to the exact SHAP values (as shown in Figure \ref{fig:results_drybean}e), weak performances of SHAP in smaller budgets are observed, especially with the SVM model.

\begin{center}
    \Image[width=0.45\linewidth]{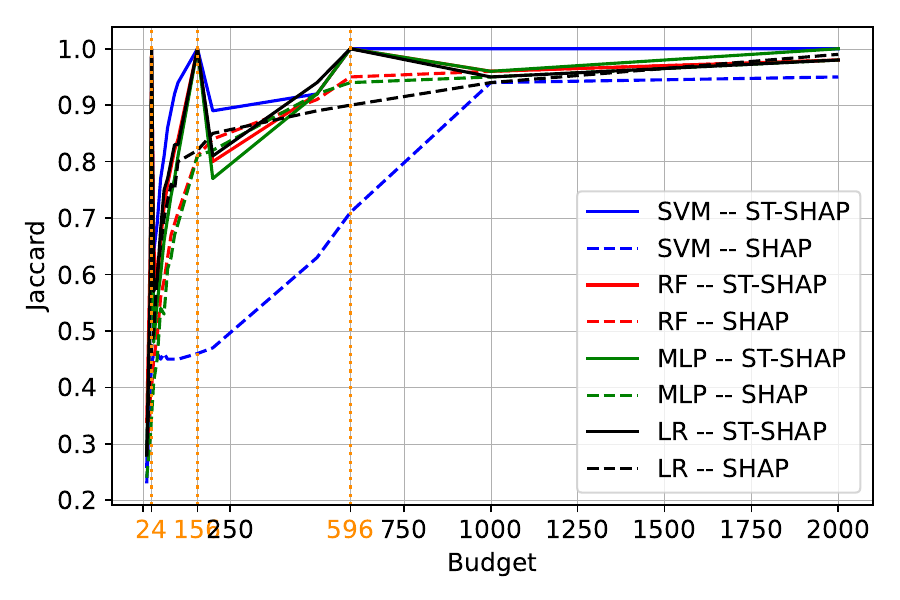}{(a) Jaccard} \,
    \Image[width=.406\linewidth]{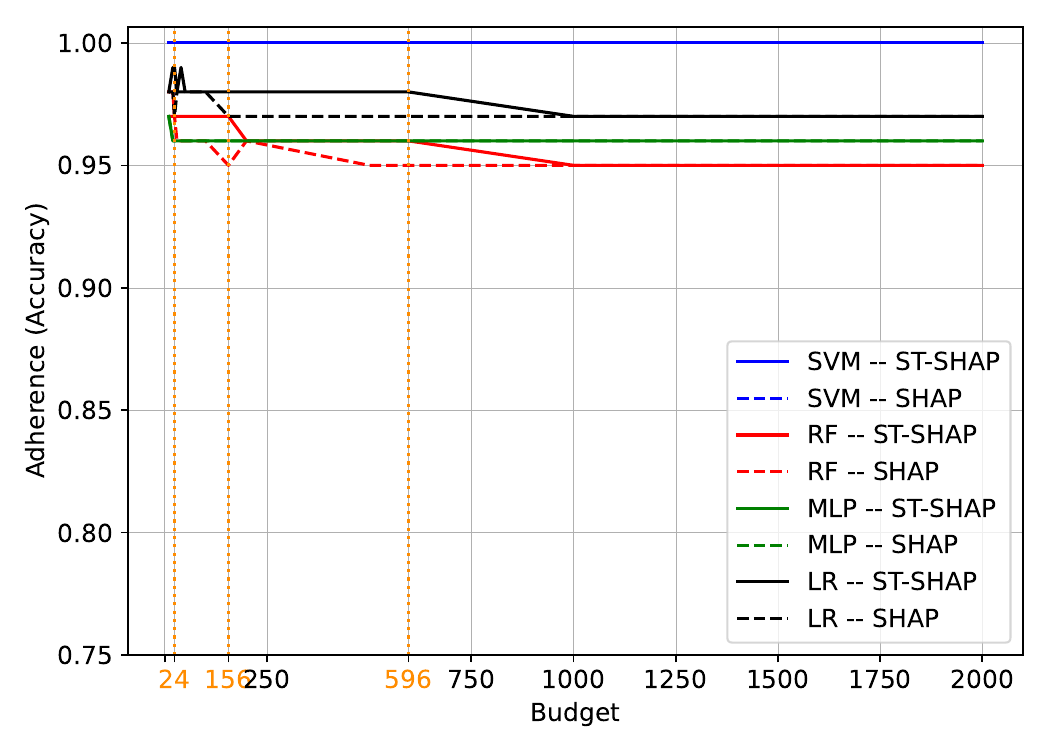}{(b) Adherence with the black-box} \,
    
    \Image[width=0.45\linewidth]{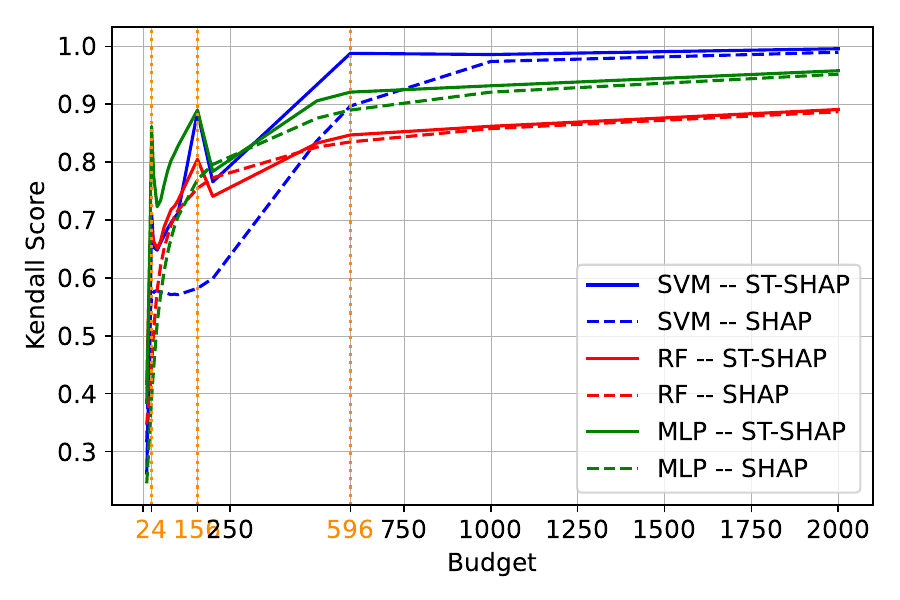}{(c) Kendall $\tau$ with the exact SHAP values} \,
    \Image[width=0.45\linewidth]{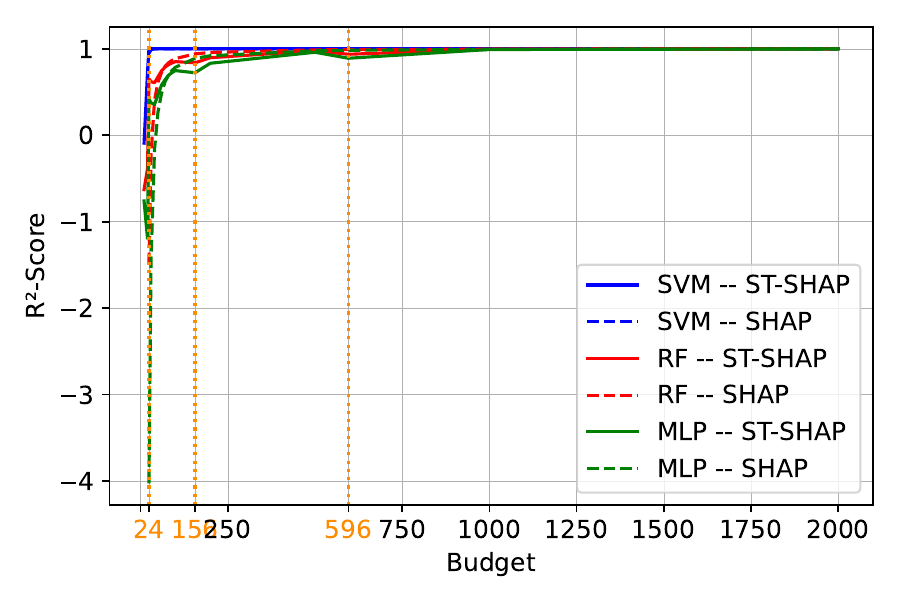}{(d) $R^2$-Score with the exact SHAP values} \,
    
    \captionof{figure}{Comparison results between SHAP and ST-SHAP across various criteria on the Adult Income dataset.}
    \vspace{.2cm}
    \label{fig:results_adult}
\end{center}

Figure \ref{fig:results_adult} illustrates SHAP and ST-SHAP results on the Adult Income dataset. ST-SHAP is generally better on stability than SHAP (Figure \ref{fig:results_adult}a on Jaccard Index). With adherence metrics ranging from $0.95$ to $1.0$, we can deduce that our approach ST-SHAP exhibits similar behavior to SHAP (Figure \ref{fig:results_adult}b on Adherence to black-box with $R^2$-Score). Figure \ref{fig:results_adult}c depicts the Kendall coefficient according to exact SHAP values, showing comparable performance between both methods, with weaker performance on SHAP with SVM. ST-SHAP slightly outperforms SHAP, particularly with peaks in complete layers.
Moving on to Figure \ref{fig:results_adult}d, we show the outcomes in terms of $R^2$-score concerning the exact SHAP values. The results generally exhibit similar trends, although SHAP demonstrates weaker performance in smaller budgets.

\begin{center}
\vspace{.2cm}
    \Image[width=0.45\linewidth]{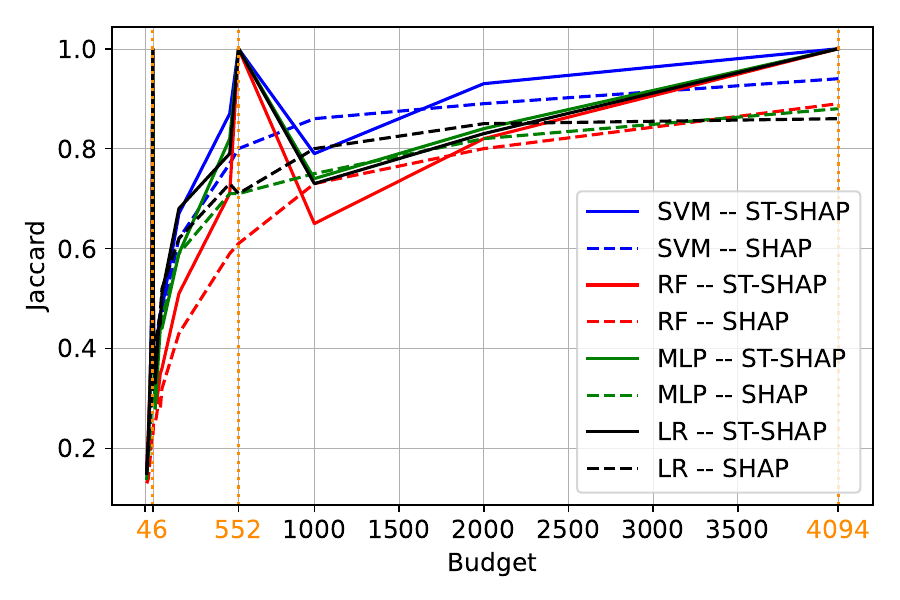}{(a) Jaccard} \,
    \Image[width=.406\linewidth]{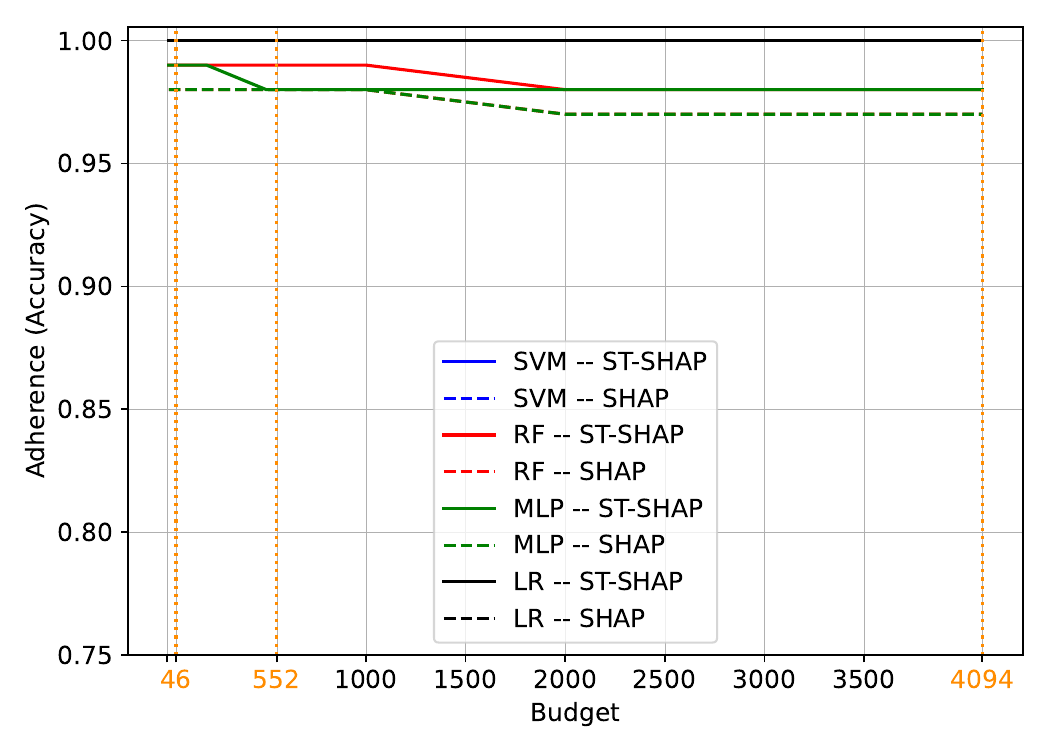}{(b) Adherence with the black-box} \,
    \captionof{figure}{SHAP and ST-SHAP on the Credit dataset.}
    \vspace{.2cm}
    \label{fig:results_credit}
\end{center}

 Figure \ref{fig:results_credit} presents Kernel SHAP and ST-SHAP results on the Credit dataset. For the stability criterion (Figure \ref{fig:results_credit}a), the interpretations made for the other datasets also apply to this one. As the adherence (Figure \ref{fig:results_credit}b) is between $0.97$ and $1.0$, we can conclude that our approach behaves similarly to SHAP, and both approaches exhibit good performance.

\begin{center}
    \Image[width=0.5\linewidth]{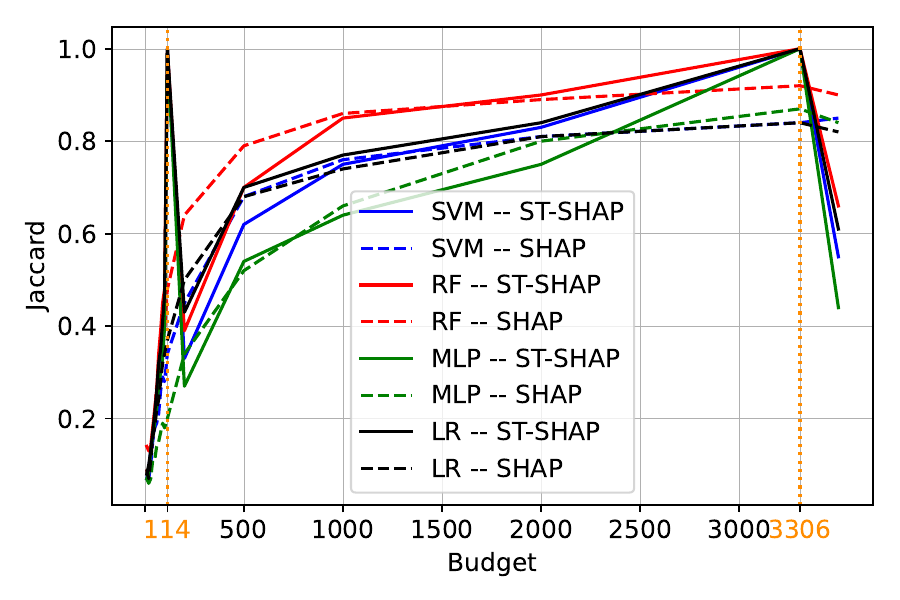}{(a) Jaccard} \,
    \Image[width=.456\linewidth]{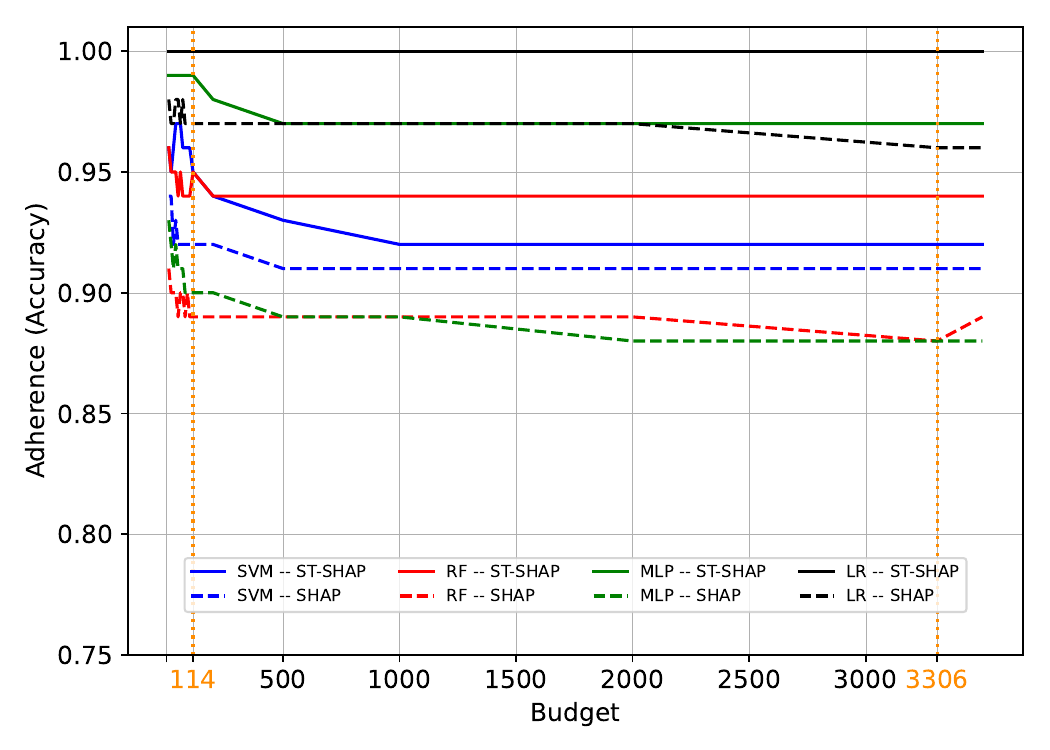}{(b) Adherence with the black-box} \,
    \captionof{figure}{SHAP and ST-SHAP on the Spambase dataset.}
    \vspace{.2cm}
    \label{fig:results_spambase}
\end{center}

 \begin{center}
    \Image[width=0.45\linewidth]{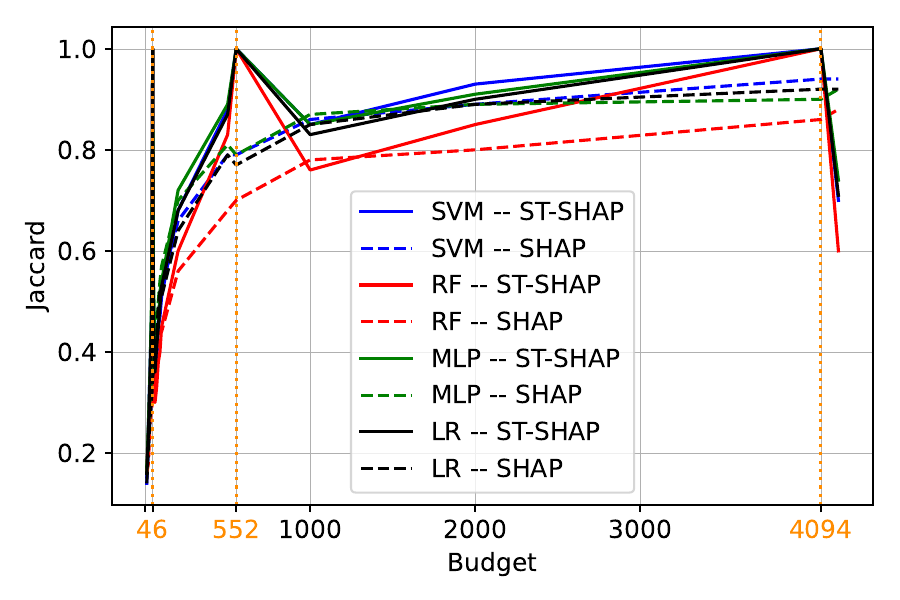}{(a) Jaccard} \,
    \Image[width=.406\linewidth]{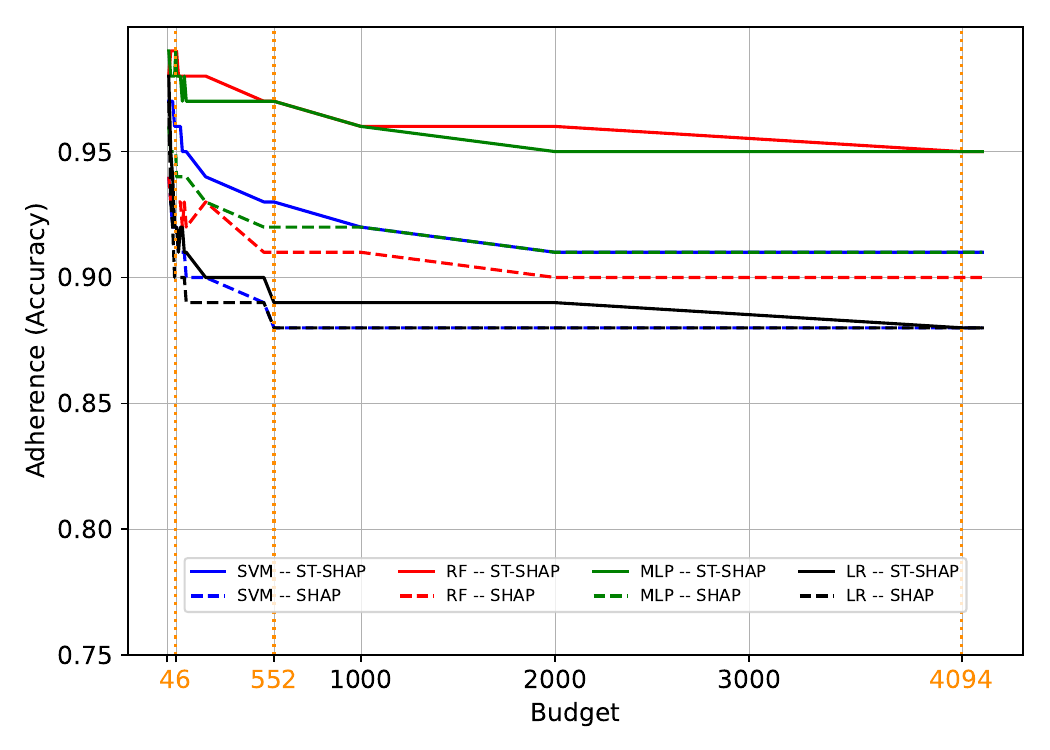}{(b) Adherence with the black-box} \,
    \captionof{figure}{SHAP and ST-SHAP on the HELOC dataset.}
    \vspace{.2cm}
    \label{fig:results_heloc}
\end{center}

  \begin{center}
    \Image[width=0.45\linewidth]{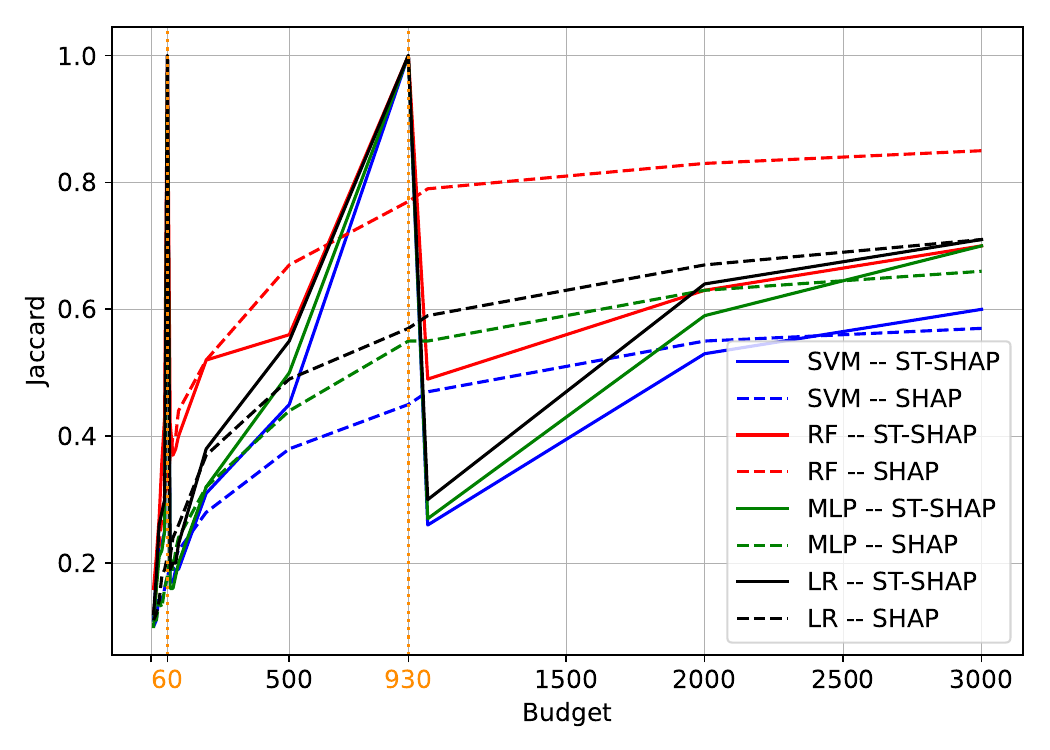}{(a) Jaccard} \,
    \Image[width=.406\linewidth]{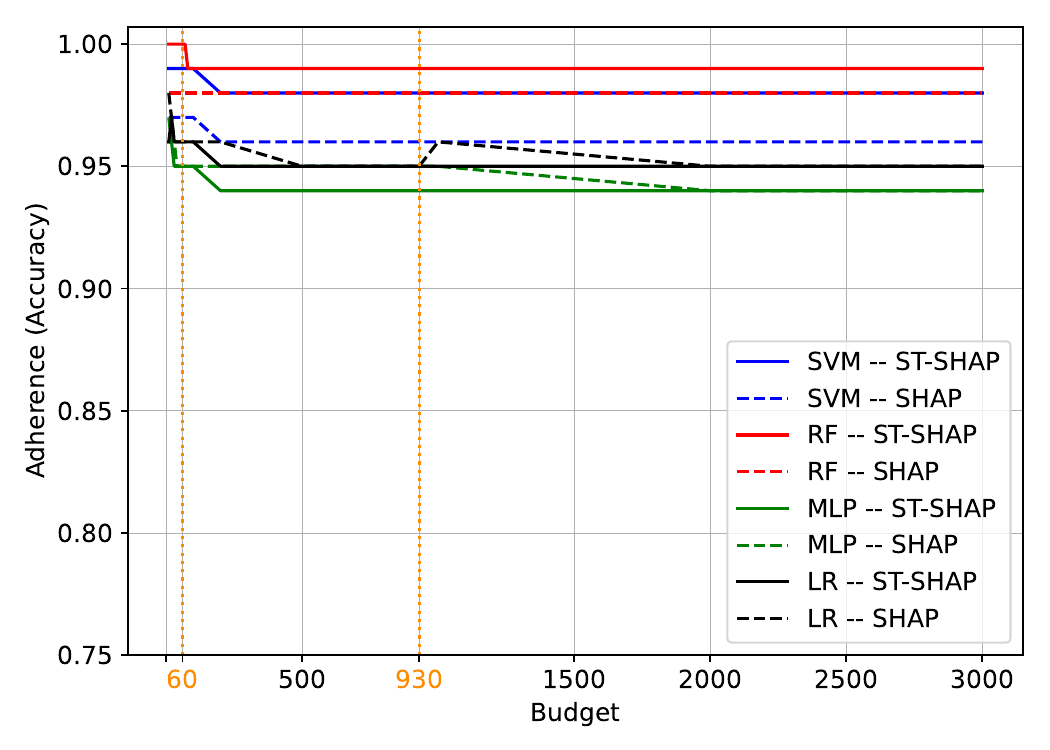}{(b) Adherence with the black-box} \,
    \captionof{figure}{SHAP and ST-SHAP on the Breast Cancer Wisconsin (Diagnostic) dataset.}
    \vspace{.2cm}
    \label{fig:results_wdbc}
\end{center}

\subsection{Comparison of execution times}
This section compares execution times for Layer 1, including the calculation of exact SHAP values, and for SHAP using a ``normal'' budget (approximating the recommended budget for SHAP).

\begin{table}[!htb]
\centering
\resizebox{0.8\linewidth}{!}{
    \begin{tabular}{lccccccccccc}
\toprule
         & \multicolumn{3}{c}{ST-SHAP Layer 1} &  & \multicolumn{3}{c}{Exact SHAP value} &  & \multicolumn{3}{c}{Kernel SHAP \footnotesize(budget = $2000$)} \\ \midrule
         & SVM        & RF         & MLP       &  & SVM        & RF        & MLP       &  & SVM         & RF          & MLP        \\ \cmidrule(lr){2-4} \cmidrule(lr){6-8} \cmidrule(lr){10-12}
Boston   & 0.003      & 0.032      & 0.007     &  & 0.22       & 0.154     & 0.17      &  & 0.083       & 0.087       & 0.06       \\
Dry bean & 0.031      & 0.02       & 0.019     &  & 7.382      & 0.78      & 1.07      &  & 0.28        & 0.098       & 0.103      \\
Adult    & 0.014      & 0.01       & 0.008     &  & 1.663      & 0.099     & 0.103     &  & 0.64        & 0.067       & 0.067      \\
Movie    & 0.01       & 0.028      & 0.008     &  & 6.802      & 4.88      & 4.62      &  & 0.083       & 0.088       & 0.062      \\ \bottomrule
\end{tabular}
}
\caption{Average computation time (in seconds) for explanations using ST-SHAP Layer 1, Kernel SHAP with a budget of $2000$ coalitions, and the exact SHAP value computed with all coalitions. The calculations are carried out using the entire test set for each dataset.}
\label{tab:exec_time_layer1_shapley}
\end{table}

Table \ref{tab:exec_time_layer1_shapley} presents the results obtained for the execution time. It is noticeable that the runtime of ST-SHAP Layer 1 is significantly more efficient than that required to compute the exact SHAP values. It is up to three orders of magnitude faster than computing the exact SHAP values.
Considering that the standard Kernel SHAP is recommended with a budget of $2000$ ($2\cdot M + 2^{11}$), utilizing ST-SHAP Layer 1 shows to be up to one order of magnitude faster than Kernel SHAP. This is so because the budget is significantly lower than $2000$. Moreover, as we saw previously, explanations calculated using only the coefficients from Layer 1 are stable, exhibit good adherence to the black-box model, and remain a good approximation of the exact SHAP value.

\end{document}